\documentclass[letterpaper]{article} 
\usepackage{aaai2026}  
\usepackage{times}  
\usepackage{helvet}  
\usepackage{courier}  
\usepackage[hyphens]{url}  
\usepackage{graphicx} 
\usepackage{natbib}  
\usepackage{caption} 
\usepackage{algorithm}
\usepackage{algorithmic}
\usepackage{amsfonts}
\usepackage{amsthm}
\usepackage{amsmath}

\usepackage{subcaption}
\usepackage{newfloat}
\usepackage{listings}
\usepackage{booktabs}
\usepackage{multirow}
\usepackage[table]{xcolor}
\usepackage{xspace}
\usepackage{newfloat}
\usepackage{listings}
\DeclareCaptionStyle{ruled}{labelfont=normalfont,labelsep=colon,strut=off} 
\floatstyle{ruled}
\newfloat{listing}{tb}{lst}{}
\floatname{listing}{Listing}
\newcommand{\algoname}[1]{{\fontfamily{cmtt}\selectfont#1}}
\newtheorem{proposition}{Proposition}
\newtheorem{lemma}{Lemma}
\newcommand{\Catalonia}{Catalonia\xspace}
\newcommand{\RisCanvi}{RisCanvi\xspace}
\newcommand{\citeriscanvi}{\cite{riscanvi}}
\newcommand{\citeriscanvicb}{\cite{Dribia2024RisCanviAudit}}
\newcommand{\citecourtuse}{\cite{JimenezArandia2026RisCanvi}}
\newcommand{\spara}[1]{\smallskip\noindent\textbf{#1}}

\title{Model Multiplicity and Predictive Arbitrariness in Recidivism Risk Assessment}
\author{
    Ashwin Singh\textsuperscript{\rm 1},
    Carlos Castillo\textsuperscript{\rm 2}
}
\affiliations {
    \textsuperscript{\rm 1}TU Wien, Vienna, Austria\\
    \textsuperscript{\rm 2}ICREA and Universitat Pompeu Fabra, Barcelona, Spain\\
    ashwin.singh@tuwien.ac.at, chato@icrea.cat 
}

\begin{document}

\maketitle

\begin{abstract}
Prediction tasks over individual futures, which are inherently noisy, often admit multiple similarly accurate models. When these models produce different predictions for the same individual, they raise concerns of arbitrariness in decision-making. How severe can this arbitrariness be, in theory and in practice? How can it be resolved to support high-stakes risk assessment? 
We address these questions through a study of a machine learning-based decision support system for recidivism risk assessment that has been in use for over 15 years. By translating complex legal rules into an algorithm for labeling post-release outcomes (\emph{recidivist} or \emph{non-recidivist}), we first construct a dataset of thousands of inmate releases. Using this dataset, we learn interpretable models that improve predictive performance, reduce error-rate disparities between groups, and ensure that rehabilitative progress lowers risk scores.
Next, we study predictive multiplicity, by first deriving a tight lower bound on the expected predictive agreement of any finite set of models over a dataset, and then by evaluating the extent to which structural diversity (e.g., different model coefficients) within this set translates to predictive multiplicity (i.e., different predictions for the same individual).
Our experiments indicate that the existence of many similarly accurate models with comparable error-rate disparities does not necessarily translate into severe predictive multiplicity. Empirically, similarly performant models can exhibit substantially higher predictive agreement than worst-case theoretical guarantees suggest. We find that a simple policy that assigns each inmate the lowest risk among these models is effective for addressing predictive arbitrariness.
\end{abstract}

\section{Introduction}
The use of machine learning to support risk assessments of criminal recidivism is one of the most well-studied applications in algorithmic fairness research. 
Increased interest on this topic over the past decade can be traced back to a study of the \algoname{COMPAS} algorithm that has been a subject of considerable debate and research~\cite{angwin2016machine,rudin2020age,Jackson2020Setting,bao2021compaslicated}.
Since the publication of these studies, recidivism prediction has arguably taken the role of a ``model organism'' for research in this area, similarly to well-studied species that are selected for intensive examination in biology research, or to platforms that have taken that role in social computing research~\cite{tufekci2014big}.

Considering the high-stakes nature of recidivism risk assessment, the European Union's AI Act~(\citeyear{ai_act_2024}) categorizes it as a high-risk application of AI.  
To comply with the AI Act, high-risk AI systems must, among other requirements, demonstrate an appropriate level of accuracy, robustness, and resilience to inconsistencies.

In practice, these criteria are difficult to satisfy.
Obtaining \textbf{reliable training data} requires determining if people released in the past recidivated, which is legally complex.
For instance, due to long judicial processes and sentencing delays, individuals sometimes serve a prison sentence only to be re-incarcerated later for a crime committed before their first term. 
Hence, returning to prison does not always indicate recidivism. This means labeling needs to be done either manually (delaying the extraction of up-to-date training data) or by applying a complex set of rules~\cite{karimihaghighi2022risk}.
While some systems circumvent this issue by using \emph{re-arrests} as proxy labels---such as \algoname{COMPAS}~\cite{Jackson2020Setting} or  \algoname{OASYS}~\cite{OASYS}---this is problematic for two reasons.
First, the distinction between an \emph{arrest} and \emph{conviction} is best captured in the scrutiny of judicial due process. Second, arrest rates are often unequal across demographic groups. Using \emph{re-arrest} labels can therefore reinforce bias through feedback loops, a risk the AI Act explicitly requires high-risk systems to mitigate.

Even with ground truth labels, \textbf{consistency} in recidivism risk assessment is difficult to achieve.
There are often many equally accurate models that assign different predictions to the same individuals.
This has been dubbed the \emph{Rashomon Effect}:\footnote{The name ``Rashomon'' refers to the 1950 movie by director Akira Kurosawa, in which multiple people offer credible but mutually incompatible accounts of the same crime.} when different models for the same hypothesis class perform similarly well on a prediction task~\cite{breiman}.
The set of all such models is  called a \emph{Rashomon Set}, and presents an opportunity to address algorithmic fairness concerns.
One can, for instance, search within a \emph{Rashomon Set} for models that satisfy desirable properties such as statistical non-discrimination criteria and monotonicity constraints~\cite{JMLR:v20:18-760, pmlr-v139-coston21a, pmlr-v235-rudin24a}. 
Even then, predictive disagreement among such models raises concerns of \textbf{arbitrariness} in the decision process.
While ensembling and randomization have been proposed to resolve predictive multiplicity~\cite{multiplicityFAccT}, both approaches have important limitations. Ensembling reduces interpretability, which is especially important in a high-stakes context, whereas randomization is often perceived unfavorably by stakeholders~\cite{CHImultiplicity}.
More broadly, much of the literature on model multiplicity does not engage with its implications in real-world settings~\cite{AIESmultiplicity}. \\

\spara{Contributions. }Our work seeks to address these concerns simultaneously.
We study \algoname{\RisCanvi}, a decision support system for recidivism risk assessment that has been operational across prisons in \Catalonia since 2009.
By translating a complex set of legal rules into an algorithm for determining post-release outcomes (\emph{recidivist} or \emph{non-recidivist}), we construct a dataset of over 17.5K inmate releases between 2010 and 2019. Using this dataset, which is $17\times$ larger than the one \algoname{\RisCanvi} is currently trained on, we build on the MILP (mixed-integer linear programming) \algoname{SLIM} formulation of \citet{Ustun2016} and the Rashomon Set exploration framework of \citet{enumfreexplor}. This allows us to learn models that are not only significantly more accurate than the one in use, but also overcome important limitations. In particular, they exhibit lower error-rate disparities between subgroups, and ensure rehabilitative progress translates into lower risk scores through monotonicity constraints. Finally, we present a simple yet effective policy for addressing predictive arbitrariness in the set of all such models. Our \textbf{main contributions} are as follows:

\begin{enumerate}
\item We generalize the search for less discriminatory models in the Rashomon Set to \(m>2\) subgroups by adding only \(O(m)\) constraints to the MILP of \citet{enumfreexplor}.
\item We extend the notion of self-consistency introduced by \citet{arbitraryAAAI} to Rashomon Sets and derive a tight lower bound on expected self-consistency for any finite set of binary classifiers over a dataset. 
\item We present the first (to the best of our knowledge) real-world case study of predictive multiplicity in recidivism risk assessment.
\item We propose and study a \emph{lowest-risk policy} for addressing predictive arbitrariness, with a detailed discussion of its institutional and legal implications in recidivism risk assessment.
\end{enumerate}

The rest of this paper is organized as follows.
The next section provides a background on \algoname{\RisCanvi}, on non-discrimination criteria for supervised learning, and on predictive multiplicity in Rashomon Sets (\S~\ref{sec:background}).
Then, we describe our optimization framework, experimental setup, and evaluation strategy (\S~\ref{sec:methods}).
Next, we present our results (\S~\ref{sec:results}) and discussion (\S~\ref{sec:discussion}).
The last section presents our conclusions and the limitations of this study (\S~\ref{sec:conclusions}).

\section{Background}
\label{sec:background}

In this section, we first provide background about the design goals and features of \algoname{\RisCanvi}, and situate its use as a decision support tool for risk assessment (\S~\ref{subsec:riscanvi}).
We then introduce technical preliminaries on supervised learning and statistical non-discrimination criteria (\S~\ref{subsec:nondiscrimsuper}). 
Finally, we give an overview of related work on Rashomon Sets and the predictive multiplicity that arises from them (\S~\ref{subsec:rashomon}).

\subsection{\RisCanvi}
\label{subsec:riscanvi}

\algoname{\RisCanvi} is a decision support tool for risk assessment that has supported management of sentencing conditions and the provision of prison alternatives to inmates in \Catalonia for more than 15 years~\citeriscanvi. It uses logistic regression with two types of features to predict recidivism risk:
(i) $20$ \textbf{static} features that describe immutable characteristics of the inmate (e.g., age at the onset of criminal activity), and
(ii) $23$ \textbf{dynamic} features that can vary over time, and in principle, should lead to risk reduction due to rehabilitative progress (e.g., level of education).
An overview of features used by \algoname{\RisCanvi} can be found in Appendix~\S~\ref{supp:riscanvi-features}. Thresholds over the predicted risk are then used to classify inmates into one of three risk levels (low, medium and high). Assigned risk levels support evaluations by the prison staff, including social workers, psychologists, and  lawyers, which are considered in judicial decisions related to requests for parole, in changes to sentence conditions (``degree of imprisonment''), and in the design of rehabilitation programs.

\algoname{\RisCanvi} is trained on labels that indicate whether an inmate recidivated within five years of being released. Specifically, it considers \emph{penal recidivism}, i.e., when a person is sentenced to prison and actually enters prison to serve that prison sentence. However, reliably constructing these labels at scale is challenging. This is because penal recidivism is determined by a complex set of legal rules and applying these rules is a time-consuming process susceptible to human error. Moreover, due to limited institutional resources, this labeling process is not carried out regularly. As a result, \algoname{\RisCanvi} often lacks up-to-date training data.

In principle, all prison inmates should undergo a \algoname{\RisCanvi} evaluation once every six months. In reality, longitudinal evaluation records for most inmates have longer gaps.
Only close to the end of their prison sentence do we find that complete evaluations---having all features---are reliably found. This is partly because the law in \Catalonia mandates risk assessment before an inmate is released from prison.
In addition, other concerns about \algoname{\RisCanvi} have been documented by external audits. Most notably, the system has a high false negative rate, and exhibits significant false positive rate disparities between nationals and foreigners~\citeriscanvicb. 

\subsection{Non-Discrimination in Supervised Learning}
\label{subsec:nondiscrimsuper}

Let $\mathcal{X}\subseteq\mathbb{R}^{p}$ and $\mathcal{Y}=\{-1,1\}$ denote the feature space and label space respectively. In our case, $y=1$ implies \emph{recidivist} whereas $y=-1$ implies \emph{non-recidivist}. A dataset consists of labeled examples $\mathcal{S}=\{(\mathbf{x}_i,y_i)\}_{i=1}^n$ where $\mathbf{x}_i\in\mathcal{X}$ is the feature vector for instance $i$ and $y_i \in \mathcal{Y}$ is the corresponding label. In supervised learning, the goal is to learn a classifier $h:\mathcal{X}\to\mathcal{Y}$ from a specified hypothesis class $\mathcal{H}$ that minimizes a loss function
$\ell:\mathcal{Y}\times\mathcal{Y}\to\mathbb{R}_{\ge 0}$ over $\mathcal{S}$. In binary classification, a standard choice is the $0$-$1$ loss, $\ell(h(\mathbf{x}),y)=\mathbf{1}[h(\mathbf{x})\neq y].$ The empirical risk of a classifier $h$ on a dataset $\mathcal{S}$ is then
$L_{\mathcal{S}}(h)=\frac{1}{n}\sum_{i=1}^n \mathbf{1}[h(\mathbf{x}_i)\neq y_i].$ 

In our work, we consider the hypothesis class \(\mathcal{H}\) of sparse linear integer models as they are widely used for recidivism risk assessments~\cite{HOFFMAN1994477, strongr}. It consists of models that predict \(h(\mathbf{x})=1\) if \(\lambda_h^\top \mathbf{x}\geq \gamma\), and $-1$ otherwise. Here, \(\lambda_h\in\mathbb{Z}^p\) denotes the integer coefficient vector and \(\gamma\in\mathbb{R}\) denotes the decision threshold. 

Let $\mathcal{G} = \{G_1,    \dots,G_m\}$ denote the partition of $\mathcal{S}$ into demographic groups induced by the intersections of sensitive attributes, and let $g:\mathcal{X}\to\mathcal{G}$ be the group membership function.
A number of \textbf{statistical non-discrimination criteria} for supervised learning have been proposed in the literature; they typically equalize a group-dependent statistic (see, e.g., \citeauthor{fairmlbook} \citeyear{fairmlbook}).
\emph{Statistical Parity}~\cite{statisticalparity} equalizes prediction rates across all pairs of groups $G_i,G_j \in \mathcal{G}$:
\[\mathbb{P}(h(\mathbf{x})=1 \mid g(\mathbf{x})=G_i) = \mathbb{P}(h(\mathbf{x})=1 \mid g(\mathbf{x}) = G_j)\] 
This criterion can be insufficient when groups have unequal base rates, which is often the case with criminal recidivism. \emph{Equalized Odds}~\cite{equalizedodds} addresses this limitation by conditioning a prediction $h(\mathbf{x})$ on its true outcome $y$. In particular, it requires the following to hold for all $y \in \mathcal{Y}$ and $G_i,G_j \in \mathcal{G}$:
\[\mathbb{P}(h(\mathbf{x})=1 \mid g(\mathbf{x})=G_i, y)=\mathbb{P}(h(\mathbf{x})=1 \mid g(\mathbf{x}) = G_j, y).\]

\subsection{Rashomon Sets and Predictive Multiplicity}\label{subsec:rashomon}
The \emph{Rashomon Effect}, i.e., the existence of multiple models with equivalent prediction accuracy, is prevalent in noisy prediction tasks such as criminal recidivism and commonly formalized through a \emph{Rashomon Set}~\cite{simplepathnoise}.
Let $h_\mathcal{S}=\arg\min_{h \in \mathcal{H}} L_{\mathcal{S}}(h)$ be the empirical risk minimizer. Then for $\epsilon \geq 0$, the $\epsilon$-Rashomon Set consists of all models in $\mathcal{H}$ whose empirical risk lies within $\epsilon$ of $h_\mathcal{S}$~\cite{JMLR:v20:18-760}. Formally:
\[
\mathcal{R}(\mathcal{H},\epsilon,\mathcal{S}) = \{h \in \mathcal{H}\;:\;L_{\mathcal{S}}(h)\le L_{\mathcal{S}}(h_\mathcal{S}) + \epsilon\}.
\]
Enumerating the Rashomon Set is generally intractable. Accordingly, there is a growing body of work on methods for exploring it. Broadly, these methods either enumerate the set, sample from it, or search it for models with desired properties. Enumeration-based methods recover the complete set, which is feasible for discrete hypothesis classes such as sparse decision trees and rule sets~\cite{TreeFarms2022, rulesetexplor2024}. 
Sampling-based methods trade completeness for scalability, generating near-optimal model sets from both discrete and continuous hypothesis classes. Examples include sparse linear integer models obtained via beam
search~\cite{fasterrisk2022} and neural networks obtained via dropout-based techniques~\cite{dropouthsu2024}. 
Finally, search-based methods optimize over the Rashomon set to find models with properties of interest. Two notable goals of this optimization are (i) finding less discriminatory models (LDMs) within the Rashomon Set~\cite{ldagillis, enumfreexplor}, and (ii) resolving arbitrariness in decision-making due to predictive multiplicity~\cite{pmlr-v119-marx20a}. 
Regarding the first goal, finding LDMs is important because a model sampled uniformly at random from the Rashomon Set may be significantly more discriminatory than the least discriminatory one~\cite{multEAAMO}. Regarding the second goal, arbitrariness can arise not only from multiplicity within the Rashomon Set, but also from variation in the training data or learning procedure~\cite{loneout}. \citet{arbitraryAAAI} introduce \emph{self-consistency} to measure how stable a prediction is across models trained with the same learning procedure on equally sized training samples. Formally, for an instance $\mathbf{x}$:
\[
SC(\mathcal{A},\mathbb{S},\mathbf{x}) = \mathbb{E}_{h_{S_i},h_{S_j}\sim\mu}\left[\mathbf{1}[h_{S_i}(\mathbf{x})=h_{S_j}(\mathbf{x})]\right]
\]
\[
= \mathbb{P}_{h_{S_i},h_{S_j}\sim\mu}(h_{S_i}(\mathbf{x})=h_{S_j}(\mathbf{x})).
\]
where \(\mathbb{S}\) denotes the set of all equally sized subsets of \(\mathcal{S}\), and \(\mu\) is the distribution over models generated by applying \(\mathcal{A}\) to samples \(S \in \mathbb{S}\). In their framing, low self-consistency corresponds to a greater degree of arbitrariness. 

Our work builds on search-based methods for exploring Rashomon Sets, and on self-consistency for studying the predictive multiplicity within them. Methodologically, our optimization framework (\S~\ref{subsec:methods-optimization}) extends the MILP-based search of~\citet{enumfreexplor} to find LDMs across $m > 2$ groups under monotonicity constraints. To study the arbitrariness among the resulting models, we extend the notion of self-consistency to Rashomon Sets and derive a tight lower bound on its expectation over a dataset $\mathcal{S}$ (\S~\ref{subsec:predmul}).

\section{Methods}
\label{sec:methods}

In this section, we first introduce the MILP-based optimization framework with our extensions (\S~\ref{subsec:methods-optimization}), then describe our dataset, experimental setup, baselines, and evaluation metrics (\S~\ref{subsec:methods-experimental}), followed by our extension of self-consistency~\cite{arbitraryAAAI} to study predictive multiplicity in finite model sets (\S~\ref{subsec:predmul}).

\subsection{Optimization Framework}
\label{subsec:methods-optimization}
We use a two-step optimization framework. First, we learn a sparse linear integer model that enforces monotonicity constraints on how static and dynamic features shape risk scores. Then, we use this model to initialize a search for less discriminatory models within its Rashomon Set.
\subsubsection{SLIM with Monotonicity Constraints.} 
The \algoname{SLIM} (\emph{Supersparse Linear Integer Model}) MILP, given by ~\citet{Ustun2016}, is designed to learn linear scoring systems with integer coefficients. We use it to find an empirical risk-minimizer $h_\mathcal{S}$ for recidivism prediction, using balanced 0--1 loss as the objective. A key advantage of this approach is that modern MILP solvers either certify optimality or report an optimality gap, i.e., the difference between the incumbent feasible solution and the best bound on the objective. 

Moreover, we enforce monotonicity constraints in \algoname{SLIM} by restricting feature coefficients to non-negative integers. These constraints are important from a practical standpoint because they guarantee that changes in recidivism risk align with expectations of the prison staff. Specifically, predicted risk should increase in the presence of more adverse indicators (static features) and decrease based on responsiveness to rehabilitation (dynamic features). 

\subsubsection{Finding Less Discriminatory Models.}
The framework proposed by~\citet{enumfreexplor} uses an empirical risk minimizer \(h_\mathcal{S} \in \mathcal{H}\) to initialize the MILP that explores the corresponding $\epsilon$-Rashomon Set. We use the solution returned by our extended \algoname{SLIM} formulation and build on their MILP formulation for scoring systems. In particular, we modify it to minimize the maximum pairwise error-rate disparity across \(m>2\) groups. Let
\[
\Delta_{\mathrm{FPR}}(h)
=
\max_{G_i,G_j\in\mathcal{G}}
\left|
\mathrm{FPR}_{G_i}(h)-\mathrm{FPR}_{G_j}(h)
\right|
\]
and
\[
\Delta_{\mathrm{FNR}}(h)
=
\max_{G_i,G_j\in\mathcal{G}}
\left|
\mathrm{FNR}_{G_i}(h)-\mathrm{FNR}_{G_j}(h)
\right|.
\]
Our objective minimizes the \emph{Equalized Odds} violation:
\[
\Delta_{\mathrm{EO}}(h)
=
\max\left\{
\Delta_{\mathrm{FPR}}(h),
\Delta_{\mathrm{FNR}}(h)
\right\}.
\]

Observe that the largest error-rate disparity between any two groups does not require comparing every pair explicitly. For false positive rates, the worst pair is simply the group with the highest FPR and the group with the lowest FPR; their difference equals the maximum pairwise FPR disparity. The same holds for false negative rates. We therefore introduce auxiliary variables that track the minimum and maximum FPR and FNR across groups, and minimize the larger of the two resulting gaps. This reduces the number of fairness constraints from \(O(m^2)\), for all pairwise group comparisons, to \(O(m)\), for \(m\) groups. Additionally, we reuse the monotonicity constraints from the \algoname{SLIM} formulation. We refer to the resulting formulation as \algoname{FairSLIM}, which is included in Appendix~\S~\ref{supp:fairslim-milp}.

\subsection{Experimental Setup}
\label{subsec:methods-experimental}
We now describe in detail how we construct our dataset, the chronological train-test split, and the MILP configuration used to instantiate the optimization framework. 

\subsubsection{Iterative Algorithm Design.} To identify recidivism reliably and at scale, we develop a labeling algorithm through an iterative process in close collaboration with researchers specializing in sentence enforcement.\footnote{These researchers work at the Department of Justice (DoJ) of Catalonia, which operates \algoname{RisCanvi}. They do not share any affiliation with the authors, and were not involved in any modeling decisions in this work. They are also distinct from the practitioners who produced \algoname{RisCanvi}'s original training labels. However, both groups apply the same protocol prescribed by the DoJ for labeling penal recidivism.} We begin by translating the legal rules that define penal recidivism into an initial rule-based algorithm. In each subsequent iteration, we share the labels assigned by the algorithm with the researchers. In turn, they annotate a random sample of this data and provide justifications for the incorrectly labeled instances. Then, we incorporate these justifications to refine the rule set of the algorithm. This process is repeated until labels assigned by the algorithm fully match those of the researchers. 
We validate the correctness of our proposed algorithm by comparing its labels against ground-truth labels for all releases between 2010 and 2015. Because our algorithm exactly reproduces the labels prescribed by the same protocol used to label RisCanvi's training data, the two label sets are commensurable. Upon validation, we apply our algorithm to the remaining releases to construct our dataset. Appendix~\S~\ref{supp:labelingalgo} provides a condensed version of the pseudo code for the labeling algorithm. 

\subsubsection{Dataset Description.}
The resulting dataset contains 17.5K releases from prisons in \Catalonia between 2010 and 2019. Table~\ref{tab:dataset_description} reports the number of releases and recidivism rates across groups defined by age, sex, and nationality. Age groups are split at $30$, a common cutoff in criminology that reflects differences in types of crimes and social contexts between younger and older adults~\cite{agegroups}. As described in \S~\ref{subsec:riscanvi}, each release is associated with 43 features from the inmate's last evaluation, typically conducted six to nine months before release.

\begin{table}[h]
\centering\small
\caption{Distribution of recidivists and non-recidivists across groups based on sensitive attributes.}
\label{tab:dataset_description}
\setlength{\tabcolsep}{6pt}
\begin{tabular}{lrr}
\toprule
\textbf{Group} & \textbf{\# of Releases} & \textbf{Recidivism Rate} \\
\midrule
Age $\geq 30$ & 14,049 & 0.36 \\
Age $< 30$ & 3,493 & 0.50 \\
\midrule
Female & 655 & 0.47 \\
Male & 16,887 & 0.38 \\
\midrule
National & 11,785 & 0.37 \\
Foreigner & 5,757 & 0.42 \\
\midrule
Overall & 17,542 & 0.39 \\
\bottomrule
\end{tabular}
\end{table}

\subsubsection{Train-Test Split.} We use a chronological train-test split. The train set includes releases from fully observed years 2010--2014 as well as recidivists from partially observed years (2015--2018) that precede the test set year, 2019. Non-recidivists for release years 2015--2018 cannot be ascertained until after the test year, and are therefore excluded from training. The test set consists of all inmates released in 2019, whose five-year observation period concluded in 2024. Table~\ref{tab:split_stats} summarizes this split. We did preliminary experiments using training data solely from the fully observed years, and also probabilistically from partially observed years, but observed a consistently lower accuracy than in this set-up (results omitted for brevity).
\begin{table}[t]
\centering\small
\caption{Overview of train-test split. Releases from train years 2015--2018 include only observed recidivists, since non-recidivists cannot be confirmed before the test year. For instance, a person released after 2015 may recidivate in 2020, which is after the test year 2019.}
\begin{tabular}{llrr}
\toprule
\textbf{Split} & \textbf{Years} & \textbf{\# of Releases} & \textbf{Recidivism Rate} \\
\midrule
\multirow{2}{*}{Train}
& 2010--2014 & 8,177 & 0.44 \\
& 2015--2018 & 1,958 & 1.00 \\
\midrule
Test & 2019 & 1,881 & 0.29 \\
\bottomrule
\end{tabular}
\label{tab:split_stats}
\end{table}
\subsubsection{MILP Configuration.}
Both \algoname{SLIM} and \algoname{FairSLIM} are implemented in Python using the Gurobi solver~\cite{gurobi}. In \algoname{SLIM}, we use a finite coefficient grid that includes small incremental weights as well as larger values for highly predictive features. Since all features are processed so that larger values indicate higher risk, their coefficients $\lambda$ are chosen from a set of non-negative values $\mathcal{L} =\{0,1,2,3,4,5,6,7,8,9,25,50,100\}$ to enforce monotonicity. Intercept $\lambda_0$ is allowed to take both positive and negative values from the signed version of $\mathcal{L}$. Since all features take values in \(\{0,0.5,1\}\) and coefficients are integers, every score lies on a grid with spacing \(0.5\). Thus any correctly classified point has margin at least \(0.5\) and any choice of \(\gamma \in (0,0.5]\) is valid. We set \(\gamma=0.1\). For robustness, we run \algoname{SLIM} for ten random seeds (four hours per seed). Although the solver does not converge within the allotted time, the majority of improvement in its objective occurs within the first 30 minutes for all seeds, with only marginal gains thereafter (details in Appendix~\S~\ref{supp:convergeruntime}). 

For every seed $s \in \{1,\dots,10\}$, we use the incumbent feasible solution $h_s$ returned by \algoname{SLIM} to instantiate the \algoname{FairSLIM} MILP, and explore its Rashomon Set $\mathcal{R}(\mathcal{H},\epsilon,\mathcal{S})$ over $\epsilon  \in \{.01,.02,.03,.04,.05\}$. Each run is allowed 2 hours per $\epsilon$ per seed. As with \algoname{SLIM}, \algoname{FairSLIM} does not reach provable optimality within the allotted time. However, its incumbent objective stabilizes well before the time limit is reached, and exhibits much smaller optimality gaps compared to \algoname{SLIM}. During this search, we retain all models whose fairness objective $\Delta_{\mathrm{EO}}$ lies within 5\% of the best solution obtained by \algoname{FairSLIM}. We use $\mathcal{P}_s$ to denote the set (or pool) of all such models retained for seed $s$; the number of models in $\mathcal{P}_s$ never exceeds 10 (on average 3.9 ± 2.6).

\begin{table*}[t!]
\centering\small
\caption{Predictive performance and algorithmic fairness metrics. Results for \algoname{CatBoost} and \algoname{SLIM} are averaged over ten seeds and reported as mean $\pm$ standard deviation. We highlight our proposed Lowest-Risk Policy (\algoname{FairSLIM-LRP}) which is applied to the aggregated model pool $\mathcal{P}$ with $\epsilon\leq.01$. The performance of \algoname{\RisCanvi} on the train set is computed using only the fully observed years (2010--2014); otherwise its accuracy drops substantially (to 49.7\%) due to errors in predicting the positive class.}
\label{tab:performance_train_test}
\setlength{\tabcolsep}{5pt}
\begin{tabular}{llccccccc}
\toprule
\textbf{Split} & \textbf{Model} & \textbf{F1 Score} & \textbf{Accuracy} & \textbf{Balanced Accuracy} & \textbf{FPR} & \textbf{FNR} & $\mathbf{\Delta_{\mathrm{FPR}}}(h)$ & $\mathbf{\Delta_{\mathrm{FNR}}}(h)$ \\
\midrule
\multirow{2}{*}{Train}
& \algoname{RisCanvi}
& $25.6$
& $60.8$
& $56.2$
& $2.9$
& $84.7$
& $4.9$
& $14.5$ \\

& \algoname{CatBoost} 
& $80.9 \pm 0.1$ 
& $78.5 \pm 0.1$ 
& $78.1 \pm 0.1$ 
& $26.2 \pm 0.1$ 
& $17.7 \pm 0.1$ 
& $17.8 \pm 0.7$ 
& $11.5 \pm 0.3$ \\

& \algoname{SLIM}
& $78.1 \pm 0.3$ 
& $76.0 \pm 0.4$ 
& $75.8 \pm 0.6$ 
& $26.1 \pm 1.9$ 
& $22.4 \pm 1.1$ 
& $30.1 \pm 2.0$ 
& $15.7 \pm 1.9$ \\

& \cellcolor{green!10}\algoname{FairSLIM-LRP}
& \cellcolor{green!10}$71.1$
& \cellcolor{green!10}$72.3$
& \cellcolor{green!10}$73.6$
& \cellcolor{green!10}$14.5$
& \cellcolor{green!10}$38.4$
& \cellcolor{green!10}$14.2$
& \cellcolor{green!10}$8.9$ \\
\midrule
\multirow{4}{*}{Test}
& \algoname{RisCanvi}
& $24.5$
& $71.5$
& $55.3$
& $5.2$
& $84.3$
& $7.5$
& $15.0$ \\

& \algoname{CatBoost}
& $58.7 \pm 0.2$ 
& $63.6 \pm 0.2$ 
& $70.7 \pm 0.2$ 
& $46.6 \pm 0.2$ 
& $12.1 \pm 0.3$ 
& $11.4 \pm 0.4$ 
& $5.4 \pm 0.5$ \\

& \algoname{SLIM}
& $58.0 \pm 0.6$ 
& $64.6 \pm 0.8$ 
& $70.0 \pm 0.6$ 
& $43.0 \pm 1.4$ 
& $17.0 \pm 0.9$ 
& $2.5 \pm 2.2$ 
& $5.9 \pm 3.2$ \\

& \cellcolor{green!10}\algoname{FairSLIM-LRP}
& \cellcolor{green!10}$58.5$
& \cellcolor{green!10}$71.2$
& \cellcolor{green!10}$70.6$
& \cellcolor{green!10}$27.8$
& \cellcolor{green!10}$31.1$
& \cellcolor{green!10}$0.5$
& \cellcolor{green!10}$5.9$ \\
\bottomrule
\end{tabular}
\end{table*}

\subsubsection{Baselines.} We compare against two baselines: (i) the currently operational \algoname{\RisCanvi} model which is based on logistic regression, and (ii) \algoname{CatBoost}~\cite{catboost}, a gradient-boosted tree method recommended by a recent third-party audit as a replacement for \algoname{\RisCanvi}~\citeriscanvicb. As with \algoname{SLIM}, monotonicity constraints for features are also enforced in \algoname{CatBoost}.

\subsubsection{Performance Metrics.} We report F1 Score, Accuracy, and Balanced Accuracy for all models to assess their predictive performance.
To evaluate discrimination, we report the maximum violation of error-rate parity across any two groups ($\Delta_{\mathrm{FPR}}(h)$ and $\Delta_{\mathrm{FNR}}(h)$). All metrics are averaged over $10$ random seeds, and reported with standard deviation.

\subsection{Predictive Arbitrariness and Self-Consistency}\label{subsec:predmul}
Our search for less discriminatory models using \algoname{FairSLIM} yields a model pool $\mathcal{P}_s \subseteq \mathcal{R}(\mathcal{H},\epsilon,\mathcal{S})$ for every seed $s$. To study predictive multiplicity, we consider the aggregated pool over the ten seeded runs \(\mathcal{P}=\bigcup_{s=1}^{10}\mathcal{P}_s\). We do so for two reasons. First, predictive multiplicity is a property of many competing models, and our seed-level pools $\mathcal{P}_s$ are too small to support meaningful evaluations of arbitrariness. Second, each seed-level pool $\mathcal{P}_s$ is obtained by initializing the solver with a different \algoname{SLIM} incumbent $h_s$. Aggregating across seeds therefore gives a larger and possibly more diverse set of less discriminatory models.

All models in $\mathcal{P}$ have comparable predictive performance and error-rate disparity on the train set. Therefore, any of these models could be a plausible candidate for deployment. 
To evaluate the resulting arbitrariness in risk assessment, we extend the notion of self-consistency~\cite{arbitraryAAAI} to Rashomon Sets, and more generally, to any finite set of models. Formally, we define the self-consistency of a finite model set \(\mathcal{P}=\{h_1,\dots,h_K\}\) for an instance $\mathbf{x}\in\mathcal{S}$ as~\footnote{We slightly abuse notation and write \(\mathbf{x}\sim\mathcal{S}\) instead of $(\mathbf{x},y)$ to denote a feature vector sampled uniformly from a dataset $\mathcal{S}$.}:
\[
SC_{\mathcal{P}}(\mathbf{x})
=
\frac{1}{\binom{K}{2}}
\sum_{i<j} \mathbf{1}[h_i(\mathbf{x})=h_j(\mathbf{x})].
\]
$SC_{\mathcal{P}}(\mathbf{x})$ can be interpreted as the probability that two models drawn uniformly at random from $\mathcal{P}$ produce the same prediction for a given instance $\mathbf{x}$. 

Next, we provide a tight lower bound for expected self-consistency for any finite set of models $\mathcal{P}$ over a dataset $\mathcal{S}$. 

\begin{proposition}\label{prop:TLB}
Let \(\mathcal{P} =\{h_1,\ldots,h_K\}\) be a finite set of binary classifiers, and let
$\bar{L}_{\mathcal{S}}(\mathcal{P})=\frac{1}{K}\sum_{h\in\mathcal{P}}L_{\mathcal{S}}(h)$ denote the average 0--1 loss of models in \(\mathcal{P}\) on dataset $\mathcal{S}$. 

If $\mu=K\bar{L}_\mathcal{S}(P)$ and $\delta=\mu-\lfloor\mu\rfloor$, then the following tight lower bound holds for expected self-consistency:
\[
\mathbb{E}_{x\sim \mathcal{S}}[SC_\mathcal{P}(x)]
\ge
1-\frac{1}{\binom{K}{2}}
\left[
\mu(K-\mu)-\delta(1-\delta)
\right]
\]
\end{proposition}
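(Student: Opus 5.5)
Fix an instance $\mathbf{x}\in\mathcal{S}$ with label $y$, and let $e(\mathbf{x})=\sum_{k=1}^K \mathbf{1}[h_k(\mathbf{x})\neq y]$ be the number of models in $\mathcal{P}$ that misclassify it. Because labels are binary, two models disagree on $\mathbf{x}$ exactly when one is correct and the other is wrong. The number of disagreeing pairs is therefore $e(\mathbf{x})(K-e(\mathbf{x}))$, which gives
\[
SC_{\mathcal{P}}(\mathbf{x}) = 1-\frac{e(\mathbf{x})\,(K-e(\mathbf{x}))}{\binom{K}{2}}.
\]
Averaging over $\mathbf{x}\sim\mathcal{S}$, the bound reduces to an upper bound on $\frac{1}{n}\sum_i f(e_i)$ with $f(t)=t(K-t)$. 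The only constraint available is $\frac{1}{n}\sum_i e_i=\frac{1}{n}\sum_i\sum_k \mathbf{1}[h_k(\mathbf{x}_i)\neq y_i]=K\bar L_{\mathcal{S}}(\mathcal{P})=\mu$, which follows by swapping the two sums. The $e_i$ are integers in $\{0,\dots,K\}$.

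The key step is to maximize the average of the concave function $f$ over integer-valued $e_i$ with fixed mean $\mu$. If the $e_i$ were real, Jensen would give $f(\mu)=\mu(K-\mu)$. Integrality makes this slightly smaller. Let $\hat f$ be the piecewise-linear interpolation of $f$ at integer points. This is the concave envelope of $f$ restricted to $\{0,\dots,K\}$: it is concave because the increments $f(t+1)-f(t)=K-2t-1$ are decreasing. Since $f(e_i)=\hat f(e_i)$, Jensen applied to $\hat f$ gives $\frac1n\sum_i f(e_i)\le \hat f(\mu)$. Writing $a=\lfloor\mu\rfloor$, we have $\hat f(\mu)=(1-\delta)f(a)+\delta f(a+1)$. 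A direct expansion, using $f(a+\delta)-\hat f(a+\delta)=\delta(1-\delta)$ (the standard gap for a quadratic with leading coefficient $-1$), gives $\hat f(\mu)=\mu(K-\mu)-\delta(1-\delta)$. Dividing by $\binom K2$ and subtracting from $1$ yields the stated bound.

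For tightness, I would exhibit a configuration attaining equality: every $e_i\in\{a,a+1\}$, with a $(1-\delta)$-fraction of instances equal to $a$ and a $\delta$-fraction equal to $a+1$. This requires $\delta n$ to be an integer, which holds automatically since $\mu n=\sum_i e_i$ is an integer. Any 0/1 error matrix with these row sums (instances) can be realized. For example, on instance $i$ let models $1,\dots,e_i$ err and the rest be correct. Since $h_k$ only needs to be an arbitrary binary classifier on $\mathcal{S}$, this is valid, and equality holds in the Jensen step for $\hat f$ because all $e_i$ lie on one linear piece.

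The main subtlety is interpreting ``tight'' correctly. The matrix above must keep the prescribed average loss, which holds by construction, but it does not fix each individual model's loss. I would state tightness as: for every $K$, $n$ and attainable $\mu$, some $\mathcal{P}$ achieves equality. If tightness is instead required with prescribed individual losses, I would rebalance by cyclically shifting which models err on each instance, so that the column sums are as equal as possible. I expect this to be the most delicate point, but it is only needed for the stronger reading.
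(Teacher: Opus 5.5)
Your proposal is correct and follows essentially the same route as the paper: write $SC_{\mathcal{P}}(\mathbf{x}) = 1 - r_{\mathbf{x}}(K-r_{\mathbf{x}})/\binom{K}{2}$, use $\mathbb{E}[r_{\mathbf{x}}]=\mu$, apply Jensen to a piecewise-linear interpolation on $\{0,\dots,K\}$, and certify tightness with per-instance error counts split between $\lfloor\mu\rfloor$ and $\lfloor\mu\rfloor+1$. The only differences are cosmetic: the paper expands to $K\mathbb{E}[r_{\mathbf{x}}]-\mathbb{E}[r_{\mathbf{x}}^2]$ and applies Jensen to the convex interpolation of $r^2$ rather than the concave interpolation of $t(K-t)$, and your check that $\delta n$ is automatically an integer fills in a detail the paper's tightness construction leaves implicit.
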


The intuition behind the bound is as follows. If $r_\mathbf{x}$ denotes the number of models in $\mathcal{P}$ that misclassify $\mathbf{x}$, there are exactly $r_\mathbf{x}(K-r_\mathbf{x})$ pairs of models that disagree in their prediction for $\mathbf{x}$. Averaging this quantity over all instances connects pairwise disagreement to how the errors of models in $\mathcal{P}$ are distributed across $\mathcal{S}$. 

Here, $\mu$ is the average number of models that misclassify an instance $\mathbf{x}$, whereas $\delta$ is simply the fractional part of this average. The bound is tight when errors are spread as evenly as possible across instances i.e., when a fraction \(1-\delta\) of instances are misclassified by \(\lfloor\mu\rfloor\) models, and the remaining \(\delta\) fraction of instances are misclassified by \(\lfloor\mu\rfloor+1\) models. If errors are less evenly spread, more models would misclassify the same instances, thereby increasing the average self-consistency. This result also generalizes a related disagreement bound by~\citet{multiplicityFAccT} for the special case where all models in $\mathcal{P}$ have the same empirical risk. The proof of this bound is included in Appendix~\S~\ref{supp:self-consistency-bound}.

It is worth noting that Proposition~\ref{prop:TLB} does not depend on the model pool being a subset of the Rashomon Set, and therefore applies to any finite set of models. Conversely, membership in a Rashomon Set instantiates the bound a priori. Since every model in $\mathcal{R}(\mathcal{H},\epsilon,\mathcal{S})$ has empirical risk at most $L_{\mathcal{S}}(h_\mathcal{S}) + \epsilon$, the average loss of any pool drawn from it is capped by the same quantity. Substituting this bound for $\bar{L}_{\mathcal{S}}(\mathcal{P})$ therefore yields a lower bound on expected self-consistency that holds for every pool of $K$ models within the $\epsilon$-Rashomon Set, before any models are generated. We use Proposition~\ref{prop:TLB} to compare the predictive agreement in the model pools returned by \algoname{FairSLIM} against its worst-case guarantee. 

\subsubsection{Policies for Resolving Predictive Multiplicity.}
Finally, we evaluate three different policies for resolving predictive multiplicity: ensembling, random model selection (from $\mathcal{P}$ for every instance $\mathbf{x}$), and a \emph{lowest-risk policy}. Since \algoname{\RisCanvi} is used as a decision support system, we define the \emph{lowest-risk policy} over the predicted risk. To each instance \(\mathbf{x}\), it assigns the lowest possible risk score, i.e., $\min_{h \in \mathcal{P}} \lambda_h^{\top}\mathbf{x}$. In the binary classification setting, this is equivalent to assigning the non-recidivist label to an individual if at least one model in \(\mathcal{P}\) predicts that this individual will not recidivate. For the sake of completeness, we also compare it against a \emph{highest-risk policy} that assigns to an inmate the highest possible risk score, i.e., $\max_{h \in \mathcal{P}} \lambda_h^{\top}\mathbf{x}$.
\begin{figure}[t!]
    \centering
    \includegraphics[width=0.89\linewidth]{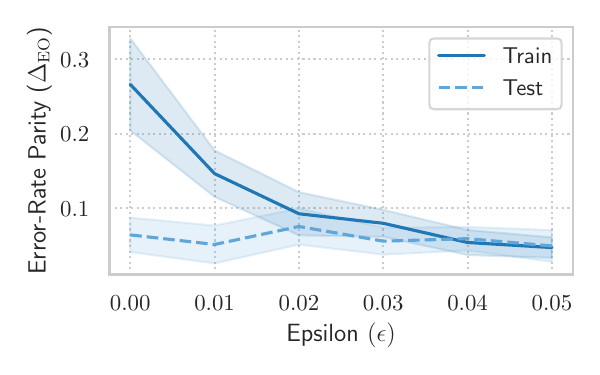}
    \caption{Error-rate parity ($\Delta_{\mathrm{EO}}$) achieved by the least discriminatory model learned using \algoname{FairSLIM} on the train and test sets for varying values of $\epsilon$. A tolerance of $\epsilon\leq .01$ is sufficient to create a model pool with reduced error-rate disparities ($\le5\%$) on the test set.}
    \label{fig:EOEPS}
\end{figure}
\section{Results}
\label{sec:results}

In this section, we present our results, evaluating models based on their predictive utility and error-rate disparities (\S~\ref{subsec:perfresults}), followed by a detailed analysis of predictive multiplicity and self-consistency (\S~\ref{subsec:predmulresults}).

\subsection{Predictive Performance and Equalized Odds}\label{subsec:perfresults}

Table~\ref{tab:performance_train_test} provides predictive performance and algorithmic fairness metrics for all models tested.
First, we note that all models trained on post-release outcomes labeled by our algorithm substantially outperform \algoname{\RisCanvi}.
While the accuracy of \algoname{\RisCanvi} may seem comparable to that of alternatives, this is largely driven by it predicting the majority class. As a result, it has poor balanced accuracy and a high false negative rate.
In contrast, both \algoname{SLIM} and \algoname{CatBoost} achieve balanced accuracies of approximately $70\%$, reducing the false negative rate by more than $65\%$.
Second, we find that \algoname{SLIM} matches \algoname{CatBoost}'s predictive performance on the test set, while exhibiting lower error-rate disparities across subgroups. In fact, searching for less discriminatory models (LDMs) using \algoname{FairSLIM} within $1\%$ of the best \algoname{SLIM} objective ($\epsilon \leq .01$) reduces the maximum error-rate disparity between any two subgroups to 5\% on the test set, as shown in Figure~\ref{fig:EOEPS}. Together, these results undermine the case for replacing \algoname{\RisCanvi} with a more complex model such as \algoname{CatBoost}, rather than with a simpler and interpretable scoring system like \algoname{SLIM}.
\begin{figure}[t!]
    \centering
    \includegraphics[width=\linewidth]{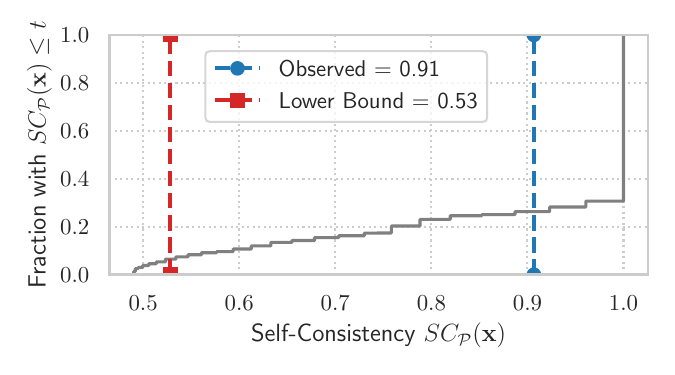}
    \caption{Empirical CDF of Self-Consistency on the test set for the aggregated model pool $\mathcal{P}$ with \mbox{$|\mathcal{P}| = 78, \epsilon\leq .01$}. Nearly $70\%$ instances exhibit perfect predictive agreement across all models, resulting in high average self-consistency.}
    \label{fig:SCCDF}
\end{figure}
\subsection{Predictive Multiplicity in Model Pools}\label{subsec:predmulresults}
We now study predictive multiplicity in the model pools returned by \algoname{FairSLIM}. In particular, we examine where and how often these models disagree on individual predictions, and evaluate a simple way of addressing the resulting predictive arbitrariness.

\subsubsection{Self-Consistency and Structural Diversity.} Despite the existence of many similarly accurate models with similar error-rate disparities across groups, model multiplicity translates into limited predictive multiplicity.
Instead, the model pools returned by \algoname{FairSLIM} exhibit high predictive agreement.
\begin{figure}[t!]
    \centering
    \includegraphics[width=\linewidth]{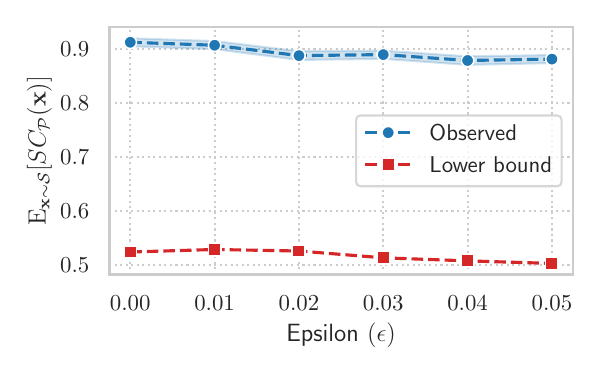}
    \caption{Observed versus worst-case expected self-consistency on the test set for varying values of $\epsilon$. While the lower bound becomes increasingly pessimistic as $\epsilon$ grows, the observed average self-consistency remains high.}
    \label{fig:ASCEpsTest}
\end{figure}
For instance, as shown in Figure~\ref{fig:SCCDF}, the probability that two models sampled uniformly at random from the pool assign the same prediction to a random individual from the test set is 91\%. Moreover, Figure~\ref{fig:ASCEpsTest} shows that the average self-consistency remains substantially higher across values of $\epsilon$ than worst case analysis suggests (based on the lower bound from Proposition~\ref{prop:TLB}). This trend also holds at a subgroup-level, and is stable across sampled pool sizes $k \in \{5, 10, 15, 20, 25\}$ for all values of $\epsilon$, indicating that our conclusions do not depend on the size or composition of the aggregated pool (see Appendix~\S~\ref{supp:self-consistency-subgroups}--\ref{supp:self-consistency-poolsize}).

Disagreement in prediction is limited to instances that lie close to the decision boundary on average, as shown in Figure~\ref{fig:absmargin01}. We measure closeness in terms of the average absolute margin across \(\mathcal{P}\), given by \(\mbox{\(\frac{1}{K}\sum_{h \in \mathcal{P}} |\lambda_h^\top \mathbf{x}-\gamma|\)}\). One may think this is because the models in the pool are structurally alike, with similar feature coefficients and therefore similar risk scores. In reality, the model pool is structurally diverse, with coefficients varying significantly across both static and dynamic features (see Figure~\ref{fig:coefvar}). Overall, we find that models largely agree in their predictions despite relying on different scoring rules, indicating that high self-consistency can coexist with structural diversity.

This finding is not a consequence of model multiplicity alone: models with comparable error rates can in principle disagree on a large fraction of instances (cf. Proposition~\ref{prop:TLB}). Furthermore, prior empirical work frequently reports substantial predictive multiplicity on benchmark datasets, including \algoname{COMPAS}~\cite{pmlr-v119-marx20a, arbitraryAAAI}. In our setting, predictive multiplicity is limited because the errors of different models largely overlap, so that disagreement is confined to the small set of low-margin instances identified above. One contributing factor may be that all models in our pools satisfy monotonicity constraints and exhibit comparably low error-rate disparities. These additional requirements further restrict the set of admissible models, whereas the pools studied in prior work are constrained only by empirical risk.

\begin{figure}[t!]
    \centering
    \includegraphics[width=\linewidth]{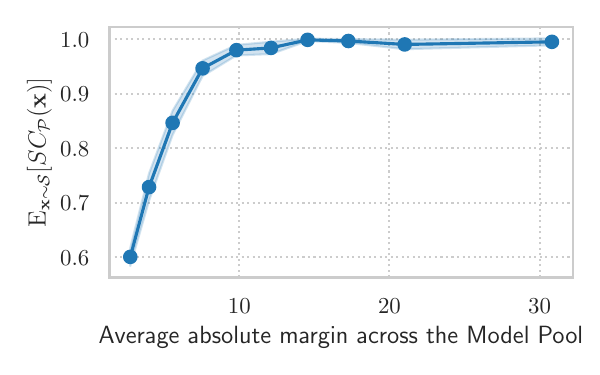}
    \caption{Expected self-consistency as a function of average absolute margin across the aggregated model pool $\mathcal{P}$ for \mbox{\(\epsilon\le.01\)}. Predictive disagreement is concentrated among low-margin instances, while high-margin instances have almost perfectly consistent predictions.}
    \label{fig:absmargin01}
\end{figure}
\begin{figure*}[h!]
\centering
\includegraphics[width=\linewidth]{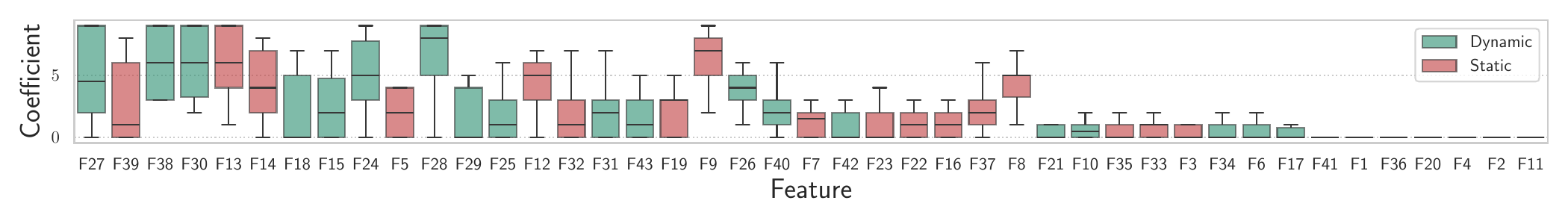}
\caption{Variance in feature coefficients across the model pool for $\epsilon \leq .01$. Although $\mathcal{P}$ exhibits high average self-consistency, coefficients vary substantially across both dynamic and static features. Features are sorted by variance in decreasing order.}
\label{fig:coefvar}
\end{figure*}
\begin{figure*}[t!]
\centering
\begin{subfigure}{0.49\textwidth}
    \centering
    \includegraphics[width=\textwidth]{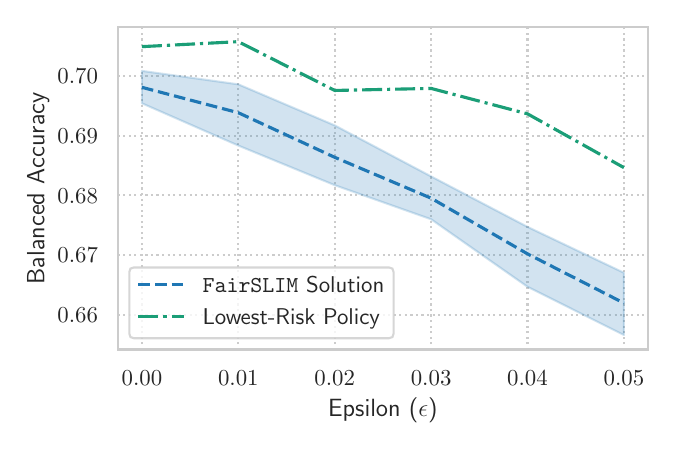}
    \caption{Balanced accuracy of the best \algoname{FairSLIM} solution and the \emph{lowest-risk policy} over the model pool $\mathcal{P}$ for varying values of $\epsilon$.}
    \label{fig:lrp}
\end{subfigure}
\hfill
\begin{subfigure}{0.49\textwidth}
    \centering
    \includegraphics[width=\textwidth]{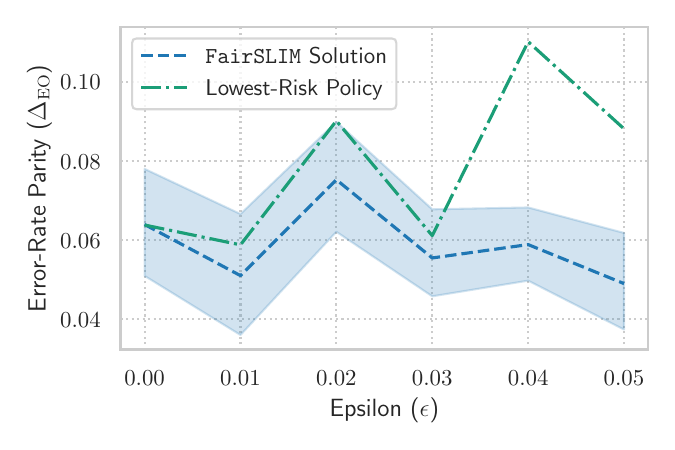}
    \caption{Error-rate parity ($\Delta_{\mathrm{EO}}$) of the best \algoname{FairSLIM} solution and the \emph{lowest-risk policy} over the model pool $\mathcal{P}$ for varying values of $\epsilon$.}
    \label{fig:lrpeo}
\end{subfigure}
\caption{A \emph{lowest-risk policy} over the aggregated model pool $\mathcal{P}$ achieves higher balanced accuracy than the best \algoname{FairSLIM} solution (lowest $\Delta_{\mathrm{EO}}$) on the test set, while demonstrating comparable error-rate parity ($\Delta_{\mathrm{EO}}$) for $\epsilon \leq .03$.}
\label{fig:LRP}
\end{figure*}
\subsubsection{A Lowest-Risk Policy to Address Predictive Multiplicity.}
Predictive multiplicity in the model pools returned by \algoname{FairSLIM} is concentrated among approximately $30\%$ of instances on the test set, most of which lie close to the decision boundary on average.
Therefore, any mechanism for resolving the resulting arbitrariness in risk assessment will predominantly affect instances for which models cannot produce a confident prediction.
In this case, we suggest applying the \emph{favor rei} principle (``rule in favor of the defendant'' [when in doubt]) and the anti-subordination principle, preferring decision-making policies that actively address systemic biases~\cite{keswani2024algorithmic}. 
Specifically, we propose assigning each inmate the risk computed by the model in the pool that produces the lowest risk score for that inmate.
This simple policy, which we call \algoname{FairSLIM-LRP} (LRP stands for lowest risk policy) achieves higher accuracy and lower false positive rate disparity than both \algoname{CatBoost} and \algoname{SLIM} on the test set (see Table~\ref{tab:performance_train_test}). In terms of relaxed equalized odds, \algoname{FairSLIM-LRP} performs just as well as the best \algoname{FairSLIM} solution, i.e., with the lowest $\Delta_{\mathrm{EO}}$, while achieving higher balanced accuracy, as shown in Figure~\ref{fig:LRP}.

The lowest-risk policy also achieves the highest balanced accuracy among the resolution mechanisms we consider, including ensembling, random model selection, and a highest-risk policy (which we test for the sake of completeness), while maintaining comparable error-rate disparities for $\epsilon\le.03$. Figure~\ref{fig:policycomparison} provides a more detailed comparison with various values of $\epsilon$. The effectiveness of \algoname{FairSLIM-LRP} can be explained by two factors. First, low risk is the majority outcome: most individuals do not recidivate, and the policy intervenes on a relatively small number of cases. Second, our training data does not contain non-recidivist labels for the set of recently freed people who have been outside of prison for more than one year and less than five years. The majority of them will not recidivate, as most recidivists reoffend within the first year of being released. However, the second factor reflects the reality of recidivism prediction rather than a limitation of our dataset. Any training set assembled at the time of deployment lacks non-recidivist labels among recent releases, since non-recidivism is only established once the five-year observation window has elapsed.

\begin{figure*}[t!]
    \centering
    \includegraphics[width=0.9\linewidth]{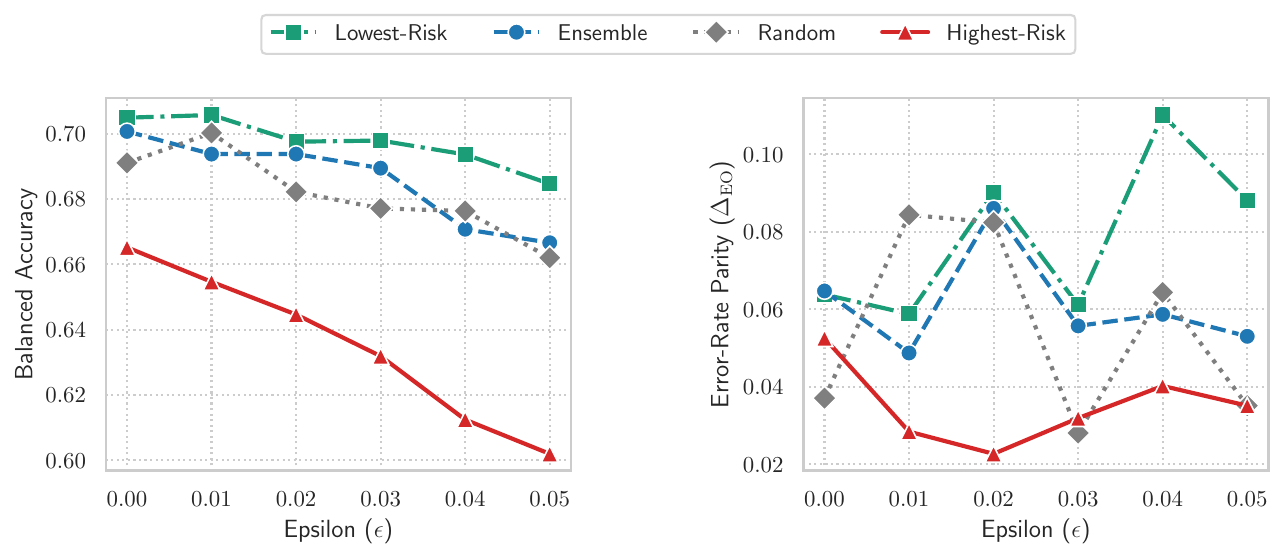}
    \caption{Comparison of policies for resolving predictive multiplicity over the aggregated model pool \(\mathcal{P}\). The \emph{lowest-risk policy} yields the highest balanced accuracy among those considered and maintains comparable error-rate parity for \(\epsilon\leq .03\).}
    \label{fig:policycomparison}
\end{figure*}
\section{Discussion}
\label{sec:discussion}
Our work addresses several challenges in \algoname{\RisCanvi}, a system that supports recidivism risk assessment for thousands of incarcerated people every year. We work closely with researchers in sentence enforcement from the Department of Justice of \Catalonia to translate legal rules into an algorithm that systematically labels post-release outcomes. This helps us obtain \textbf{reliable training data} and eliminates reliance on a time-consuming labeling process that is susceptible to human error. In practice, manual labeling of a test set would be recommended for periodic testing of the system, but not needed to generate a much larger set required for training. Using this data, we extend existing MILPs to learn interpretable models that are significantly more accurate, distribute predictive errors more evenly across groups, and ensure by design that rehabilitative progress lowers risk scores.

Our analysis of \textbf{predictive multiplicity} in the model pool highlights several important considerations. The existence of structurally diverse models does not necessarily mean vastly different predictions. Empirically, models exhibit much higher predictive agreement than what the theoretical lower bounds guarantee. Thus, it is entirely possible that the \textbf{arbitrariness} due to predictive multiplicity only affects a small proportion of instances. In our case, these instances lie close to the decision boundary on average, making their predictions ambiguous across most models. Our results indicate that a simple policy following the \emph{favor rei} principle that prevents arbitrariness by assigning each inmate the lowest possible risk among the models is effective. Below, we situate this policy within its institutional and legal context, and discuss the trade-offs it entails.

\spara{On the Lowest-Risk Policy for Punitive Interventions.}
Beyond predictive performance and lower error-rate disparities, the case for a lowest-risk policy rests on the \textit{punitive} (as opposed to \textit{assistive}) nature of recidivism risk assessments, where the cost of over-predicting risk is borne by the individual. The law governing these assessments itself acknowledges their uncertainty. In particular, Article~67 of the Spanish General Penitentiary Law~\cite{LOGP1979} requires, upon completion of rehabilitative treatment or as release approaches, a final prognosis report containing a ``probability judgment'' on the individual's future behavior, which is considered in decisions on conditional release.
Since the law itself demands a judgment of probability, residual doubt is inherent to the assessment, and the \emph{favor rei} principle extends naturally to resolving it in favor of the inmate. In the context of the anti-subordination principle~\cite{keswani2024algorithmic}, this policy helps address structural asymmetries created by predictive multiplicity. In actuarial risk assessment, the scoring rule underlying a model is typically not visible to incarcerated individuals, or the prison staff responsible for evaluating them. As a result, affected individuals lack the resources and information needed to contest assigned risk scores, even when similarly valid models could have produced less punitive outcomes. This concern is especially consequential in discrimination claims, where the burden of producing \emph{prima facie} evidence rests with the plaintiff under EU law~\cite{EUdiscrimlaw}. 
A lowest-risk policy internalizes this burden, functioning as a built-in appeal that automatically grants what a successful objection would otherwise obtain. Moreover, it counterbalances the documented tendency of both laypeople and experts to overestimate recidivism risk~\cite{Portela2025}. However, this policy entails a public-safety trade-off, since resolving arbitrariness in favor of inmates increases false negatives (albeit far fewer than the deployed model). It also means evaluating different individuals with different models, though unlike ensembling or randomization, each assessment remains attributable to a single interpretable scoring system. While our analysis concerns recidivism, we believe that our case for resolving predictive arbitrariness at the institution's expense rather than the individual's, deserves consideration for interventions of a \textit{punitive} nature more broadly.

\spara{Periodic Evaluations and Model Updates.}
Once risk assessment is viewed as an ongoing institutional process rather than a one-time prediction task, the considerations outlined above become even more complex. Articles 17 and 72 of the AI Act require periodic evaluations of high-risk systems ~\cite{ai_act_2024}, which may necessitate re-training of models on more recent data. In this setting, predictive arbitrariness becomes more complex because model pools can change over time. For instance, if an inmate is assessed by different models in consecutive evaluations, their risk assessments may fail to reflect rehabilitative progress if the updated model omits or downweights certain dynamic features. This prevents \textbf{algorithmic recourse}, i.e., the provision of information on dynamic (mutable) features whose changes are likely to reduce the computed risk~\cite{algrecourse}.
However, we believe that rehabilitation programs should not be driven by \algoname{\RisCanvi}. Instead, they should be guided by models specifically designed to identify interventions that are most likely to reduce recidivism risk---not models that predict risk in the absence of interventions~\cite{interoverpred}.
This is a separate and rather challenging task in the current institutional setting, in which, for instance, the policy of evaluating each inmate every six months is not strictly implemented, as reported in \S~\ref{subsec:riscanvi}, and in which rehabilitative progress of inmates is not guaranteed to lower their risk scores. Together, these concerns motivate our focus on models with better predictive performance, lower error-rate disparities, and explicit mechanisms for risk-reduction.
\section{Conclusions}\label{sec:conclusions}
%
This work, first, highlights the importance of automating manual labeling procedures, as opposed to relying on proxy labels or outdated data. More specifically, the much larger set of post-release outcomes labeled by our algorithm materially improves model development by increasing predictive performance, reducing error-rate disparities and aligning risk scores with domain expertise. 

Second, our results show that predictive multiplicity is not solely determined by the number of sufficiently distinct and similarly performant models; what matters instead is the extent to which their predictions overlap.
Consequently, even when many such models may exist, predictive multiplicity may be far less severe than worst-case analysis suggests. 
Proposition~\ref{prop:TLB} serves as a useful reference for empirical estimates of predictive agreement. For instance, using procedures that independently sample from the Rashomon Set, one can estimate expected self-consistency and compare it against the worst-case guarantee.
Similarly, estimating self-consistency over models that satisfy additional desiderata (statistical non-discrimination criteria, monotonicity etc.) can reveal the residual predictive arbitrariness after imposing these constraints.

A broader implication of this research is that, in the presence of model multiplicity, \emph{where} predictive disagreement occurs can greatly inform mechanisms for addressing the resulting arbitrariness in decision-making.
In particular, the geometry of disagreement (e.g., where it is concentrated relative to the decision threshold) may support policies that are grounded in existing institutional principles (e.g., resolving ambiguity in favor of subjects). 

\subsection{Limitations and Future Work}
Our work has several limitations stemming from the institutional setting, resource availability, and the scope of our analysis.
First, our analysis is based on a single operational system in \Catalonia. Therefore, our empirical results are highly contextual, and may not generalize to other jurisdictions with different legal rules, risk assessment systems, and data collection practices.
While we cannot release the dataset, it can be requested from the \emph{Àrea de planificació i Projectes Estratègics} of the Department of Justice of Catalonia, e-mail: planificacio.dj@gencat.cat -- as we did for this research.
Second, due to limited data, our work does not explicitly address the challenges introduced by periodic risk assessments and model updates.
Third, our analysis is limited to the predictive aspect of risk assessment, whereas \algoname{\RisCanvi} operates as a decision support system.
An end-to-end evaluation of the effectiveness of any new model or policy is necessary, and should be conducted in a decision support setting. This means evaluating the extent to which actual users of the system understand and interpret risk scores by different models, and more importantly, whether they reach more accurate decisions in a more reliable manner using this tool. 

Fourth, our study on predictive multiplicity is focused on the hypothesis class of sparse integer linear models. This choice reflects the design of \algoname{\RisCanvi}, and also the broader use of linear scoring systems in recidivism risk assessment, where static and dynamic features contribute additively to the final risk score. Generalizing some of the results, e.g., those related to the distribution of predictive disagreement with respect to the decision boundary, requires the exploration of a broader class of models.

Finally, the self-consistency we report in our work is \emph{procedural}, i.e., a property of our training regime, whereas Proposition~\ref{prop:TLB} bounds the \emph{worst case} for any finite set of models. Establishing \emph{existential} multiplicity, i.e., whether a monotone LDM in the Rashomon Set can flip an individual's prediction, requires instance-specific certification rather than pool-based estimation, and is left for future work.

\section*{Acknowledgments}
This work was made possible by a long-standing collaboration with the Centre d'Estudis Jurídics i Formació Especialitzada (CEJFE). We are grateful for the support of CEJFE, especially Marian, Alba, Marta, Susana, and Abril, for sharing their expertise on sentence enforcement. We also disclose that CC, through UPF, provided training to CEJFE in 2021 on causal analysis methods.
AS is supported by the Vienna Science and Technology Fund (WWTF) [Grant ID: 10.47379/VRG23013]. 
Comments by AIES reviewers also contributed to strengthening the presentation of this paper.
\clearpage

\section{Researcher Positionality Statement}
We now reflect on our position and our engagement with \algoname{\RisCanvi} as an algorithmic institution~\cite{mendoncca2024algorithmic}.
We start by acknowledging our formal training as computer scientists, with additional background in social and responsible computing. Despite our due diligence, it is possible that we overlook important considerations by researchers from other disciplines who work more closely with risk assessments in criminal justice contexts. Nevertheless, we collaborate closely with domain experts in sentence enforcement. These experts have the responsibility of generating official statistics on recidivism in \Catalonia, and have years of experience in various research projects on the topic.
Lastly, our lived experiences are far removed from those of incarcerated individuals who undergo routine evaluations by \algoname{\RisCanvi}. This significantly limits our ability to assess the efficacy of institutional processes through which rehabilitative needs are identified and met.

\section{Adverse Impacts Statement}
First and foremost, our work does not engage with the abolitionist debate regarding prisons. Rather, our work is situated within the existing legal obligation in \Catalonia to conduct risk assessments before the definitive release of any person. Although this position reinforces existing carceral structures, we believe it is urgent to address systemic issues in recidivism risk assessment tools because they continue to shape the lives of incarcerated people.

Second, \algoname{\RisCanvi} is based on the RNR (\emph{risk-need-responsivity}) model from criminology~\cite{RNR}, which requires rehabilitative interventions to (i) match their intensity to the level of recidivism \textbf{risk}, (ii) address individual criminogenic \textbf{needs}, and (iii) be tailored to individual \textbf{responsivity} to treatment. \algoname{\RisCanvi} operationalizes risk and responsivity through actuarial assessments, but provides limited visibility into how rehabilitative needs are addressed in the day-to-day lives of incarcerated people. As a result, systems like \algoname{\RisCanvi} are susceptible to reducing their rehabilitation to what can be measured through actuarial assessments.

Third, \algoname{\RisCanvi} operates as a decision support system, in compliance with Article~22 of the GDPR that affords EU residents the right to not be subject to solely automated decisions that produce significant effects~\cite{EU2016GDPRArticle22}. Recent work supports this operational setting, showing that recidivism risk assessments are most accurate when they follow a structured approach while trained professionals retain the agency to overrule model predictions~\cite{Portela2025}. Therefore, we advocate for using such tools solely as decision support systems, operated by trained professionals under appropriate institutional incentives that promote accountability and care. Nothing in this paper should be interpreted as a justification to reduce careful scrutiny of the outputs of these systems, given that they are only as reliable as their weakest link. For instance, investigative reporting on other risk assessment tools has documented cases where record keeping errors contributed to parole denials despite substantial rehabilitative progress~\cite{Wexler2017CodeOfSilence}. 

Fourth, \algoname{\RisCanvi} must be used within its intended scope. However, there have been reports of multiple cases involving a high risk score computed by \algoname{\RisCanvi}, which has been reduced, with justification, by human evaluation teams, but then prosecutors have cited the tool's score in parole proceedings as grounds for opposing release~\citecourtuse.
Nothing in our research should be interpreted as justifying out-of-scope uses of \algoname{\RisCanvi}.

\clearpage

\bibliography{aaai2026}

\clearpage
\appendix
\section{\RisCanvi Features}
\label{supp:riscanvi-features}

Table~\ref{tab:riscanvifeatures} at the end of this document, lists all of the static and dynamic features used by \algoname{\RisCanvi} for risk assessment.
These were selected in 2009 from a pool of over 100 features, through a process that tested their statistical dependency with relevant outcomes (including recidivism) plus the input of domain experts~\citeriscanvi.

\section{Rule-Based Recidivism Labeling}
\label{supp:labelingalgo}

Below, we provide pseudocode of our recidivism labeling algorithm. Note that more fine-grained details are abstracted as the algorithm is highly customized to the internal database. Put simply, it searches for crimes committed by an inmate in less than five years \textbf{after} being released for which the inmate \textbf{re-enters prison} to serve a sentence.

\begin{algorithm}[h]
\caption{Rule-based labeling of penal recidivism}
\label{alg:recidivism-labeling}
\textbf{Input}: Inmate \(I\), release date \(R\), release procedure \(P\)\\
\textbf{Output}: $\mathrm{True} / \mathrm{False}$
\begin{algorithmic}[1]
\STATE Let \(t^\star \gets \infty\)
\STATE $D \gets 5$ years
\FORALL{cases \(C \in \mathrm{Cases}(I)\)}
    \IF{\(\mathrm{ProcedureID}(C)=P\)}
        \STATE \textbf{continue} \hfill\COMMENT{Case is related to the current release.}
    \ELSIF{\(\mathrm{Verdict}(C)=\mathrm{Conviction}\)}
        \STATE \(M \gets \{m\in\mathrm{Crimes}(C): \mathrm{Date}(m)>R\}\)
        \STATE \(d \gets \min_{m\in M}\mathrm{Date}(m)\)
    \ELSIF{\(\mathrm{Verdict}(C)=\mathrm{Preventive\;Sentencing}\)}
        \STATE \(d \gets \mathrm{DetentionDate}(C)\)
    \ELSE
        \STATE \textbf{continue}
    \ENDIF
    \IF{\(d>R\)}
        \STATE \(t^\star \gets \min\{t^\star,\mathrm{TimeDiff}(R,d)\}\)
    \ENDIF
\ENDFOR
\STATE \textbf{return} \(t^\star < D\)
\end{algorithmic}
\end{algorithm}

\section{FairSLIM Extension to $m>2$ Groups}
\label{supp:fairslim-milp}
\subsubsection{Notation.}
We use $[n]$ to denote the set $\{1,\dots,n\}$, and the function $I:\mathcal{S}\to[n]$ maps instances in $\mathcal{S}$ to their indices. We define $\mathcal{S}_G =\{\mathbf{x}\in\mathcal{S}\mid g(\mathbf{x})=G\}$. $\mathcal{L}_j$ denotes the set of values that the coefficient of feature $j$, i.e., $\lambda_j$ can take.\\

The complete \algoname{FairSLIM} MILP is as follows:

{\small\begin{align}
\min_{\lambda}\quad 
& \Delta_{\mathrm{EO}} \tag{1}\\
\text{s.t.}\quad
& \lambda_j = \sum_{\omega \in \mathcal{L}_j} \omega\cdot u_{j\omega}
&& \forall j\in[p] \tag{2}\\
& \sum_{\omega \in \mathcal{L}_j} u_{j\omega}=1
&& \forall j\in[p] \tag{3}\\
& \frac{1}{n}\sum_{i=1}^{n}z_i \leq L_{\mathcal{S}}(h_{\mathcal{S}})+\epsilon \tag{4}\\
& M_i z_i \geq \gamma - y_i\lambda^\top \mathbf{x}_i
&& \forall i\in[n] \tag{5}\\
& O_i(1-z_i) \geq y_i\lambda^\top \mathbf{x}_i-\gamma
&& \forall i\in[n] \tag{6}\\
& \mathrm{FPR}_G
=
\frac{1}{|\mathcal{S}_{G}^{-}|}
\sum_{i\in I(\mathcal{S}_{G}^{-})} z_i
&& \forall G\in\mathcal{G} \tag{7}\\
& \mathrm{FNR}_G
=
\frac{1}{|\mathcal{S}_{G}^{+}|}
\sum_{i\in I(\mathcal{S}_{G}^{+})} z_i
&& \forall G\in\mathcal{G} \tag{8}\\
& \mathrm{FPR}_{\min}\leq \mathrm{FPR}_G \leq \mathrm{FPR}_{\max}
&& \forall G\in\mathcal{G} \tag{9}\\
& \mathrm{FNR}_{\min}\leq \mathrm{FNR}_G \leq \mathrm{FNR}_{\max}
&& \forall G\in\mathcal{G} \tag{10}\\
& \Delta_{\mathrm{EO}} \geq \mathrm{FPR}_{\max}-\mathrm{FPR}_{\min} \tag{11}\\
& \Delta_{\mathrm{EO}} \geq \mathrm{FNR}_{\max}-\mathrm{FNR}_{\min} \tag{12}\\
& u_{j\omega} \in \{0,1\} && \forall j \in [p], \omega \in \mathcal{L}_j \tag{13}\\
& z_i \in \{0,1\} && \forall i \in [n] \tag{14}
\end{align}}
\;
\subsubsection{MILP Description.} Equations (2) and (3) ensure that every coefficient $\lambda_j$ only takes one value from $\mathcal{L}_j$ by using $u_{j\omega}$ as selector variables. Equation (4) represents the Rashomon constraint which ensures that the balanced 0--1 loss of \algoname{FairSLIM} stays within $\epsilon$ tolerance of $L_{\mathcal{S}}(h_\mathcal{S})$ where $h_\mathcal{S}$ is the incumbent feasible solution obtained from \algoname{SLIM}. Here, $z_i = 1$ if instance $\mathbf{x}_i$ has been misclassified and $0$ otherwise. This is enforced by margin constraints (5) and (6). In (5), $M_i = \max_{\lambda \in \mathcal{L}} (\gamma - y_i\lambda^\top \mathbf{x}_i)$ is the maximum margin violation (during misclassification). Similarly, in (6), $O_i = \max_{\lambda \in \mathcal{L}} (y_i\lambda^\top \mathbf{x}_i-\gamma)$ is an upper bound on the margin (when points are correctly classified). Constraints (7) and (8) compute false positive and false negative rates respectively for all groups $G \in \mathcal{G}$. In (9) and (10), auxiliary variables $\mathrm{FPR}_{\max},\mathrm{FPR}_{\min},\mathrm{FNR}_{\max},\mathrm{FNR}_{\min}$ track the maximum and minimum false positive and false negative rates. Constraints (11) and (12) ensure that the objective $\Delta_\mathrm{EO}$ is lower bounded by the maximum error-rate disparity between any two groups $G_i,G_j \in \mathcal{G}$. Lastly, (13) and (14) simply define the domain of $z_i$ and $u_{j\omega}$. \textbf{To summarize, our contributions are constraints (7-12)}. 

\section{Convergence and Runtime}
\label{supp:convergeruntime}

Figures~\ref{fig:slimconvergence} and~\ref{fig:fairslim_convergence} show the convergence behavior of \algoname{SLIM} and \algoname{FairSLIM} respectively. While neither MILP reaches provable optimality within the allotted runtime, the incumbent objective stabilizes rapidly, with most improvements occurring within the first 30 minutes. Overall, we observe that the optimality gap is much smaller for the \algoname{FairSLIM} runs.

\begin{figure}[h]
    \centering
    \includegraphics[width=\linewidth]{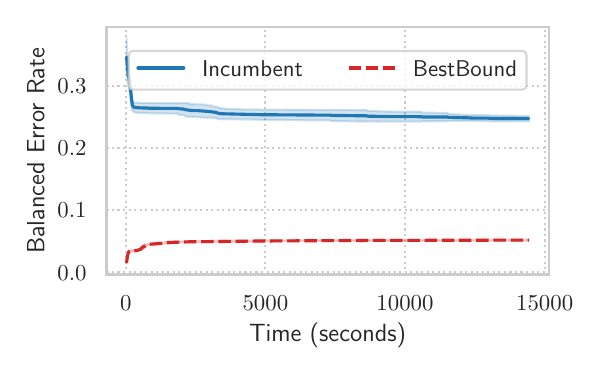}
    \caption{Convergence behavior of \algoname{SLIM}. Although the solver does not converge within the allotted time, the majority of improvement in the objective occurs within the first 30 minutes, with only marginal gains thereafter.}
    \label{fig:slimconvergence}
\end{figure}

\begin{figure*}[h]
    \centering\small

    \begin{subfigure}[b]{0.31\linewidth}
        \centering
        \includegraphics[width=\linewidth]{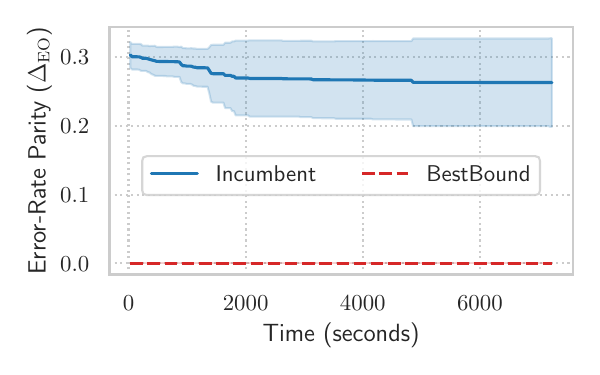}
        \caption{\(\epsilon = .0\)}
        \label{fig:conv0}
    \end{subfigure}
    \hfill
    \begin{subfigure}[b]{0.31\linewidth}
        \centering
        \includegraphics[width=\linewidth]{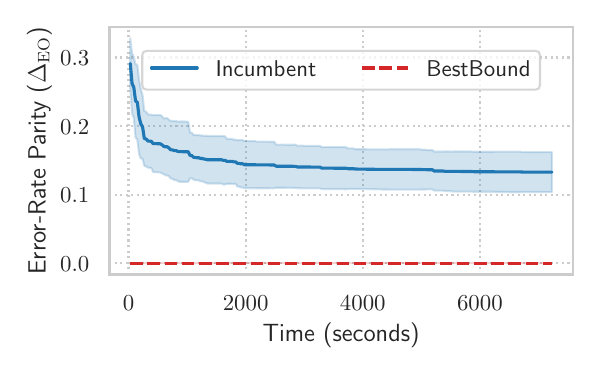}
        \caption{\(\epsilon = .01\)}
        \label{fig:conv001}
    \end{subfigure}
    \hfill
    \begin{subfigure}[b]{0.31\linewidth}
        \centering
        \includegraphics[width=\linewidth]{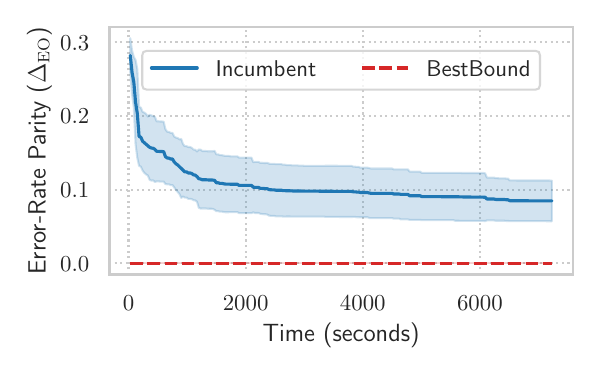}
        \caption{\(\epsilon = .02\)}
        \label{fig:conv002}
    \end{subfigure}

    \vspace{0.5em}

    \begin{subfigure}[b]{0.31\linewidth}
        \centering
        \includegraphics[width=\linewidth]{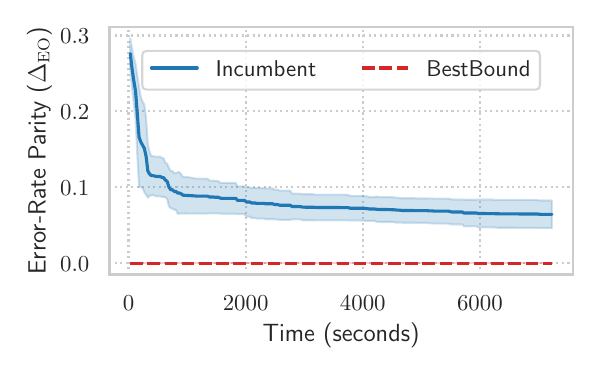}
        \caption{\(\epsilon = .03\)}
        \label{fig:conv003}
    \end{subfigure}
    \hfill
    \begin{subfigure}[b]{0.31\linewidth}
        \centering
        \includegraphics[width=\linewidth]{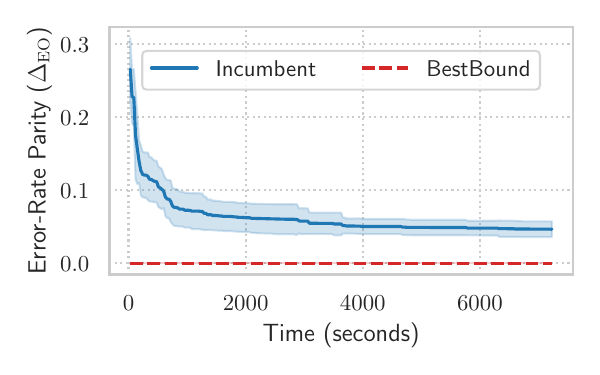}
        \caption{\(\epsilon = .04\)}
        \label{fig:conv004}
    \end{subfigure}
    \hfill
    \begin{subfigure}[b]{0.31\linewidth}
        \centering
        \includegraphics[width=\linewidth]{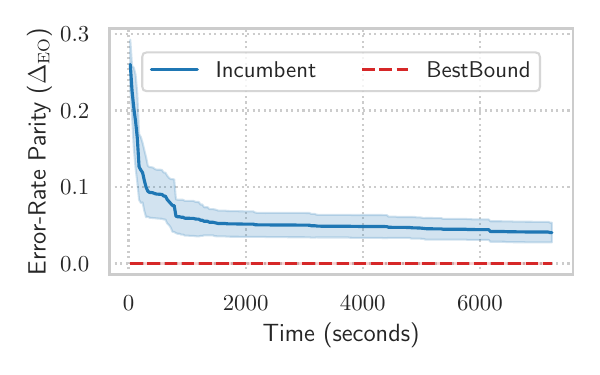}
        \caption{\(\epsilon = .05\)}
        \label{fig:conv005}
    \end{subfigure}

    \caption{Convergence behavior of FairSLIM across different Rashomon tolerances \(\epsilon\) on the train set. Although the solver does not converge within the allotted time, the majority of improvement in the objective occurs within the first 30 minutes.}
    \label{fig:fairslim_convergence}
\end{figure*}

\section{Bound for Expected Self-Consistency}
\label{supp:self-consistency-bound}
In this section, we restate our bound on expected self-consistency given in the paper, followed by its proof. For notational convenience, we encode predictions and labels as binary vectors in \(\{0,1\}^{|\mathcal{S}|}\). 

\begin{proposition}\label{supp:TLB}
Let \(\mathcal{P} =\{h_1,\ldots,h_K\}\) be a finite set of binary classifiers, and let
$\bar{L}_{\mathcal{S}}(\mathcal{P})=\frac{1}{K}\sum_{h\in\mathcal{P}}L_{\mathcal{S}}(h)$ denote the average 0--1 loss of models in \(\mathcal{P}\) on dataset $\mathcal{S}$. 

If $\mu=K\bar{L}_\mathcal{S}(P)$ and $\delta=\mu-\lfloor\mu\rfloor$, then the following tight lower bound holds for expected self-consistency:
\[
\mathbb{E}_{x\sim \mathcal{S}}[SC_\mathcal{P}(x)]
\ge
1-\frac{1}{\binom{K}{2}}
\left[
\mu(K-\mu)-\delta(1-\delta)
\right]
\]
\end{proposition}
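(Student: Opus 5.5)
The plan is to reduce the statement to a discrete convexity problem over per-instance error counts. For each $\mathbf{x}\in\mathcal{S}$ with label $y$, let $r_\mathbf{x}=\sum_{k=1}^K \mathbf{1}[h_k(\mathbf{x})\neq y]\in\{0,\dots,K\}$. Since labels are binary, every model either agrees with $y$ or with its negation. Hence the disagreeing pairs at $\mathbf{x}$ are exactly the pairs with one erring model and one correct model, and there are $r_\mathbf{x}(K-r_\mathbf{x})$ of them. This gives
\[
SC_{\mathcal{P}}(\mathbf{x}) = 1-\frac{r_\mathbf{x}(K-r_\mathbf{x})}{\binom{K}{2}}.
\]
Next, exchange the order of summation to write
\[
\frac{1}{n}\sum_{\mathbf{x}} r_\mathbf{x}=\frac{1}{n}\sum_{k}\sum_{\mathbf{x}}\mathbf{1}[h_k(\mathbf{x})\neq y]=\sum_k L_\mathcal{S}(h_k)=K\bar L_\mathcal{S}(\mathcal{P})=\mu .
\]
The claim then becomes the following: for integers $r_\mathbf{x}\in\{0,\dots,K\}$ with mean $\mu$, the average of $f(r)=r(K-r)$ is at most $\mu(K-\mu)-\delta(1-\delta)$.

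To prove this inequality, I use that $f$ is concave on the integers and apply a balancing (smoothing) argument. If two counts satisfy $r_\mathbf{x}\ge r_{\mathbf{x}'}+2$, replace them by $r_\mathbf{x}-1$ and $r_{\mathbf{x}'}+1$. This preserves the mean, and a direct computation shows it changes $f(r_\mathbf{x})+f(r_{\mathbf{x}'})$ by $2(r_\mathbf{x}-r_{\mathbf{x}'})-2>0$, so the average of $f$ strictly increases. Each such step strictly decreases $\sum_\mathbf{x} r_\mathbf{x}^2$, which is a nonnegative integer, so the process terminates. At termination all counts lie in $\{a,a+1\}$ for a single integer $a$, and the mean constraint forces $a=\lfloor\mu\rfloor$, with a fraction $\delta$ of instances at $a+1$.

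It then remains to evaluate $f$ on this balanced configuration. Its average is
\[
(1-\delta)f(a)+\delta f(a+1)=f(\mu)+\big[(1-\delta)f(a)+\delta f(a+1)-f(a+\delta)\big].
\]
Because $f$ is quadratic with leading coefficient $-1$, the bracket equals $-\delta(1-\delta)$; this is the linear-interpolation error of a quadratic. Substituting back gives the stated bound. An equivalent and tidier route is to write $\frac{1}{n}\sum_\mathbf{x} r_\mathbf{x}^2=\mu^2+\mathrm{Var}(r)$ and show that the variance of an integer-valued variable with mean $\mu$ is at least $\delta(1-\delta)$. I plan to use whichever of the two is cleaner.

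For tightness, I must construct $K$ classifiers that realize the balanced count vector. The recipe is to let a fraction $1-\delta$ of instances be misclassified by $\lfloor\mu\rfloor$ models and the rest by $\lfloor\mu\rfloor+1$, assigning erring models cyclically so that the losses have the prescribed average. I expect this step to be the main obstacle, for two reasons. First, the construction requires $\delta n$ to be an integer, or else tightness must be stated as attainable up to that granularity. Second, it requires that arbitrary prediction vectors are allowed, which holds for "any finite set of binary classifiers" but not necessarily within a given hypothesis class. I would state tightness in that sense, namely over arbitrary classifiers on $\mathcal{S}$ whenever $\mu n$ and $\delta n$ are integers.
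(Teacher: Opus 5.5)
Your proof is correct. Its skeleton matches the paper's: the per-instance identity $SC_{\mathcal{P}}(\mathbf{x})=1-r_{\mathbf{x}}(K-r_{\mathbf{x}})/\binom{K}{2}$, the exchange of sums giving mean $\mu$, and the same balanced configuration as the tightness witness.

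You differ in how you prove the key inequality. The paper lower-bounds $\mathbb{E}[r_{\mathbf{x}}^2]$ by applying Jensen's inequality to the piecewise-linear interpolant of $r^2$ on $\{0,\dots,K\}$. That interpolant is convex, agrees with $r^2$ at the integers, and is affine on $[\lfloor\mu\rfloor,\lfloor\mu\rfloor+1]$. This is a one-line argument, and it works for any distribution on $\{0,\dots,K\}$, not only the uniform empirical one.

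Your transfer argument is more elementary and gives something extra. Every transfer strictly increases total disagreement, so the balanced configuration is the unique maximizer up to permutation. Hence equality holds exactly when every $r_{\mathbf{x}}\in\{\lfloor\mu\rfloor,\lfloor\mu\rfloor+1\}$. This makes rigorous the paper's informal remark that less evenly spread errors raise self-consistency. The Jensen route yields that only after a separate equality-case analysis, because the interpolant is not strictly convex.

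Your variance alternative is also sound and is the shortest route. For every integer $R$, $(R-a)(R-a-1)\ge 0$. Taking expectations with $a=\lfloor\mu\rfloor$ gives $\mathbb{E}[R^2]\ge\mu^2+\delta(1-\delta)$ at once.

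The tightness obstacle you anticipate is milder than you fear. For any pool on $n$ instances, $n\mu=\sum_{\mathbf{x}}r_{\mathbf{x}}$ is the total error count, so $n\delta=n\mu-n\lfloor\mu\rfloor$ is automatically an integer. The average loss depends only on that total, so the cyclic assignment of erring models is unnecessary unless you also want equal individual losses. Your caveat that tightness holds over arbitrary classifiers on $\mathcal{S}$, not within a fixed hypothesis class, is accurate and is exactly what the paper's construction shows.
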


\begin{proof}
For each instance \((\mathbf{x},y)\in\mathcal{S}\), let \(r_{\mathbf{x}} = \sum_{h\in\mathcal{P}}\mathbf{1}[h(\mathbf{x})\neq y]\) denote the number of models in \(\mathcal{P}\) that misclassify \(\mathbf{x}\). Then \(K-r_{\mathbf{x}}\) models classify \(\mathbf{x}\) correctly. The number of disagreeing model pairs on $\mathbf{x}$ is simply $r_\mathbf{x}(K-r_\mathbf{x})$. Hence, 
the self-consistency of \(\mathcal{P}\) on instance $\mathbf{x}$ can be written as:
\[
SC_{\mathcal{P}}(\mathbf{x})
=
1 - \frac{1}{\binom{K}{2}}r_\mathbf{x}(K-r_\mathbf{x})
\]
Taking expectations over \(\mathbf{x}\sim\mathcal{S}\), we get:
\begin{align}
\mathbb{E}_{\mathbf{x}\sim\mathcal{S}}[SC_{\mathcal{P}}(\mathbf{x})]
&=
1-\frac{1}{\binom{K}{2}}
\mathbb{E}_{\mathbf{x}\sim\mathcal{S}}
\left[
r_{\mathbf{x}}(K-r_{\mathbf{x}})
\right] \notag\\
&=
1-\frac{1}{\binom{K}{2}}
\left(
K\mathbb{E}_{\mathbf{x}\sim\mathcal{S}}[r_{\mathbf{x}}]
-
\mathbb{E}_{\mathbf{x}\sim\mathcal{S}}[r_{\mathbf{x}}^2]
\right)
\end{align}
Observe that
\begin{align}
\mathbb{E}_{\mathbf{x}\sim\mathcal{S}}[r_{\mathbf{x}}]
&=
\frac{1}{|\mathcal{S}|}
\sum_{(\mathbf{x},y)\in\mathcal{S}}
\sum_{h\in\mathcal{P}}
\mathbf{1}[h(\mathbf{x})\neq y] \notag\\
&=
\sum_{h\in\mathcal{P}}
\frac{1}{|\mathcal{S}|}
\sum_{(\mathbf{x},y)\in\mathcal{S}}
\mathbf{1}[h(\mathbf{x})\neq y] \notag\\
&=
\sum_{h\in\mathcal{P}} L_{\mathcal{S}}(h) \notag\\
&=
K\bar{L}_{\mathcal{S}}(\mathcal{P}).
\end{align}
Substituting (2) in (1), we get
\begin{align}
\mathbb{E}_{\mathbf{x}\sim\mathcal{S}}[SC_{\mathcal{P}}(\mathbf{x})]
&=
1-\frac{1}{\binom{K}{2}}
\left(
K^2\bar{L}_{\mathcal{S}}(\mathcal{P})
-
\mathbb{E}_{\mathbf{x}\sim\mathcal{S}}[r_{\mathbf{x}}^2]
\right)
\end{align}
To obtain a lower bound on $\mathbb{E}_{\mathbf{x}\sim\mathcal{S}}[SC_{\mathcal{P}}(\mathbf{x})]$, we must minimize $\mathbb{E}_{\mathbf{x}\sim\mathcal{S}}[r_{\mathbf{x}}^2]$ such that $\mathbb{E}_{\mathbf{x}\sim\mathcal{S}}[r_{\mathbf{x}}] = K\bar{L}_{\mathcal{S}}(\mathcal{P})$. Before that, we prove the following lemma.

\begin{lemma}\label{jensen}
Let $R$ be a random variable which takes values in $\{0,\dots,K\}$ with $\mathbb{E}[R] = \mu$. If $\delta = \mu - \lfloor\mu\rfloor$, then:
\[
\mathbb{E}[R^2]\ge (1-\delta)\lfloor\mu\rfloor^2+\delta(\lfloor\mu\rfloor+1)^2.
\]
\end{lemma}
\begin{proof}
Let $\tilde{f}$ be the piecewise-linear interpolation of $f(r)=r^2$ on $\{0,\dots,K\}$. Since $f$ is convex, so is $\tilde{f}$. Applying Jensen's inequality to $\tilde{f}$, we obtain:
\[
\mathbb{E}[R^2] = \mathbb{E}[\tilde{f}(R)] \geq \tilde{f}(\mathbb{E}[R]) = \tilde{f}(\mu) \tag{4}
\]
Since \(\mu\in[\lfloor\mu\rfloor,\lfloor\mu\rfloor+1]\) and \(\mu=(1-\delta)\lfloor\mu\rfloor+\delta(\lfloor\mu\rfloor+1),\) the linearity of \(\tilde f\) on this interval gives
\[
\tilde f(\mu)
=
(1-\delta)\lfloor\mu\rfloor^2
+
\delta(\lfloor\mu\rfloor+1)^2 \tag{5}
\]
Combining (4) and (5) completes the proof of this lemma.
\end{proof}

\noindent Now, using Lemma~\ref{jensen} with $R = r_\mathbf{x},\;\mu = K\bar{L}_{\mathcal{S}}(\mathcal{P})$ and $\delta = \mu-\lfloor\mu\rfloor$, we get:
\begin{align}
\mathbb{E}_{\mathbf{x}\sim\mathcal{S}}[r_{\mathbf{x}}^2] &\geq (1-\delta)\lfloor\mu\rfloor^2 + \delta(\lfloor\mu\rfloor+1)^2 \notag\\
&\geq \mu^2 + \delta(1-\delta)\tag{6}
\end{align}
Combining (3) and (6), we get:
\begin{align}
\mathbb{E}_{\mathbf{x}\sim\mathcal{S}}[SC_{\mathcal{P}}(\mathbf{x})]
&\geq
1-\frac{1}{\binom{K}{2}}
\left(
K^2\bar{L}_{\mathcal{S}}(\mathcal{P})
-
\mu^2 - \delta(1-\delta)
\right)\notag\\
&\geq
1-\frac{1}{\binom{K}{2}}
\left[
\mu(K-\mu)-\delta(1-\delta)
\right]\notag
\end{align}

To see that the bound is tight, construct a dataset $\mathcal{S}$ in which a fraction \(1-\delta\) of instances are misclassified by exactly \(\lfloor\mu\rfloor\) models, and a fraction \(\delta\) of instances are misclassified by exactly \(\lfloor\mu\rfloor+1\) models. Then
\(\mathbb{E}_{\mathbf{x}\sim\mathcal{S}}[r_{\mathbf{x}}] =
(1-\delta)\lfloor\mu\rfloor + \delta(\lfloor\mu\rfloor+1) = \mu, \)
so the average empirical risk of the model pool is
\(\bar L_{\mathcal{S}}(\mathcal{P})=\mu/K.\)
Moreover,
\[
\mathbb{E}_{\mathbf{x}\sim\mathcal{S}}[r_{\mathbf{x}}^2]
=
(1-\delta)\lfloor\mu\rfloor^2
+
\delta(\lfloor\mu\rfloor+1)^2
=
\mu^2+\delta(1-\delta).
\]
Substituting this in (3), we get:
\[
\mathbb{E}_{\mathbf{x}\sim\mathcal{S}}[SC_{\mathcal{P}}(\mathbf{x})]
=
1-\frac{1}{\binom{K}{2}}
\left[
\mu(K-\mu)-\delta(1-\delta)
\right]
\]
which matches the lower bound. Hence, the bound is tight.
\end{proof}

\section{Self-Consistency Bound for Subgroups }
\label{supp:self-consistency-subgroups}

As a corollary of Proposition~\ref{supp:TLB}, the lower-bound for expected self-consistency of a finite model pool $\mathcal{P}$ at a subgroup-level is simply:
\[
\mathbb{E}_{\mathbf{x}\sim G}
\left[
SC_{\mathcal{P}}(\mathbf{x})
\right]
\geq
1-\frac{1}{\binom{K}{2}}
\left[
\mu(K-\mu)-\delta(1-\delta)
\right]
\]
where $\mu = K\bar{L}_G(\mathcal{P})$ and $\delta = \mu - \lfloor\mu\rfloor$. In Figure~\ref{fig:subgroupplots}, we compare the observed self-consistency of $\mathcal{P}$ against its worst-case guarantee for all subgroups in our data. We find that observed self-consistency is consistently higher than its theoretical lower bound across all subgroups and all values of $\epsilon$. Thus, despite its tightness, the bound is quite conservative in practice.

\begin{figure*}[h]
    \centering\small

    \begin{subfigure}[b]{0.31\textwidth}
        \centering
        \includegraphics[width=\textwidth]{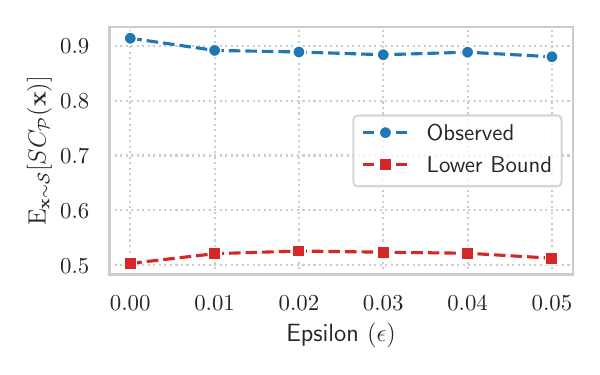}
        \caption{Male, Age$<$30, National; Size=173}
        \label{fig:1}
    \end{subfigure}
    \hfill
    \begin{subfigure}[b]{0.31\textwidth}
        \centering
        \includegraphics[width=\textwidth]{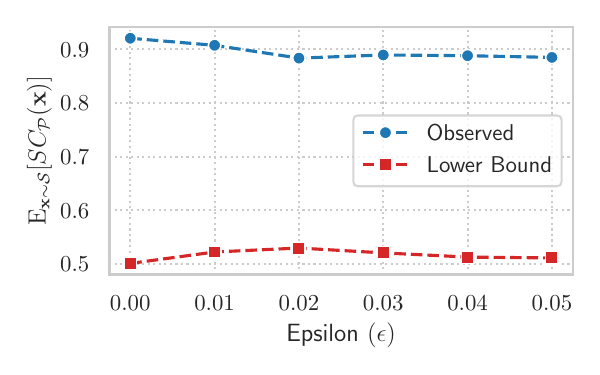}
        \caption{Male, Age$<$30, Foreigner; Size=142}
        \label{fig:2}
    \end{subfigure}
    \hfill
    \begin{subfigure}[b]{0.31\textwidth}
        \centering
        \includegraphics[width=\textwidth]{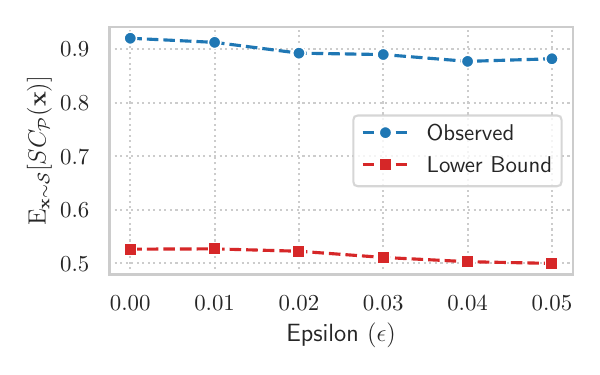}
        \caption{Male, Age$\geq$30, National; Size=982}
        \label{fig:3}
    \end{subfigure}

    \vspace{0.5em}

    \begin{subfigure}[b]{0.31\textwidth}
        \centering
        \includegraphics[width=\textwidth]{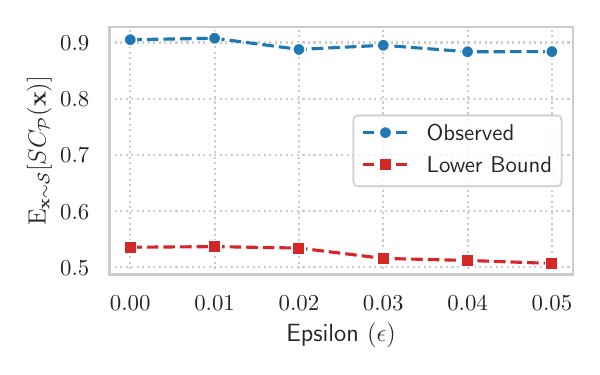}
        \caption{Male, Age$\geq$30, Foreigner; Size=501}
        \label{fig:4}
    \end{subfigure}
    \hfill
    \begin{subfigure}[b]{0.31\textwidth}
        \centering
        \includegraphics[width=\textwidth]{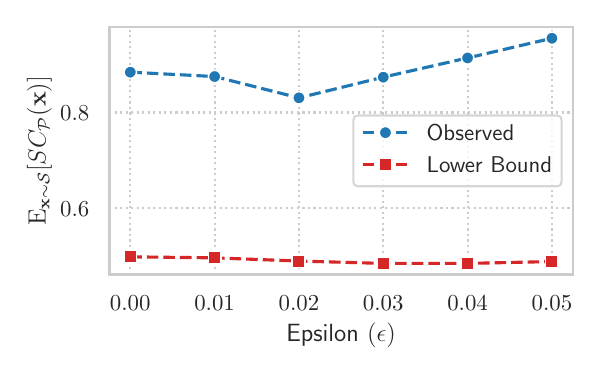}
        \caption{Female, Age$<$30, National; Size=6}
        \label{fig:5}
    \end{subfigure}
    \hfill
    \begin{subfigure}[b]{0.31\textwidth}
        \centering
        \includegraphics[width=\textwidth]{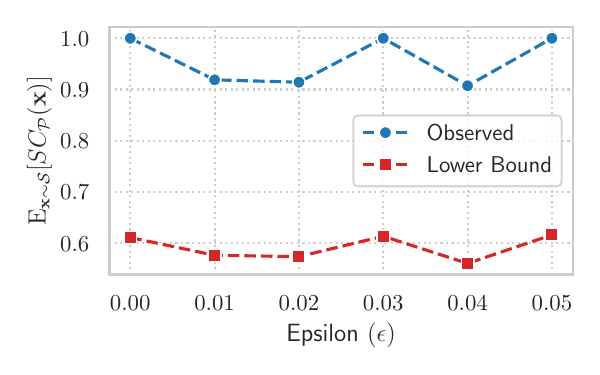}
        \caption{Female, Age$<30$, Foreigner; Size=4}
        \label{fig:6}
    \end{subfigure}

    \vspace{0.5em}

    \begin{subfigure}[b]{0.31\textwidth}
        \centering
        \includegraphics[width=\textwidth]{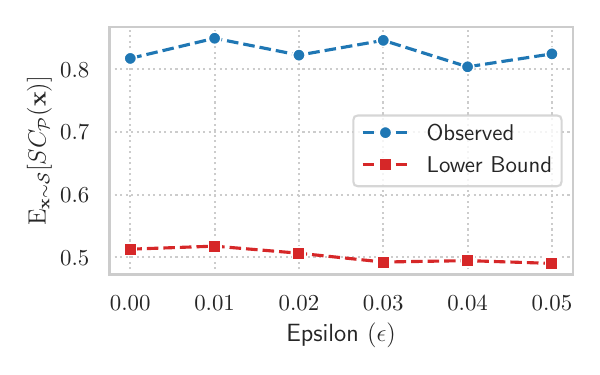}
        \caption{Female, Age$\geq$30, National; Size=61}
        \label{fig:7}
    \end{subfigure}
    \begin{subfigure}[b]{0.31\textwidth}
        \centering
        \includegraphics[width=\textwidth]{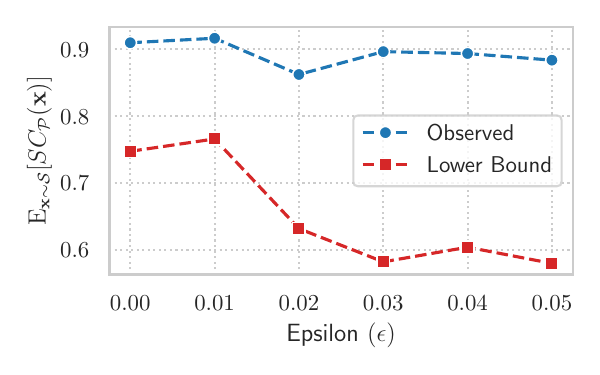}
        \caption{Female, Age$\geq$30, Foreigner; Size=12}
        \label{fig:8}
    \end{subfigure}

    \caption{Expected self-consistency of the model pool returned by \algoname{FairSLIM} on the \textbf{test set} at a subgroup-level versus $\epsilon$. While both quantities vary with $\epsilon$, a large gap between the observed values and the lower bound persists throughout. Interpretation for the smaller subgroups should be considered cautiously due to limited sample sizes.}
    \label{fig:subgroupplots}
\end{figure*}

\section{Self-Consistency and Pool Size}
\label{supp:self-consistency-poolsize}

Figures~\ref{fig:pooltrain} and~\ref{fig:pooltest} show the sensitivity of self-consistency of a model pool $\mathcal{P}$ with respect to its size. To produce these figures, we sample 10 subsets of $k$ models
uniformly at random from $\mathcal{P}$ for every combination of $k \in \{5,10,15,20,25\}$ and $\epsilon \in \{.01,.02,.03,.04,.05\}$. Then, for every sample of size $k$, we compute its average self-consistency on both the train and test sets. We observe that the overall trend that average self-consistency remains substantially higher across values of $\epsilon$ than worst case analysis suggests, is invariant to pool size for $k \leq 25$.

\begin{figure*}
    \centering
    \includegraphics[width=\linewidth]{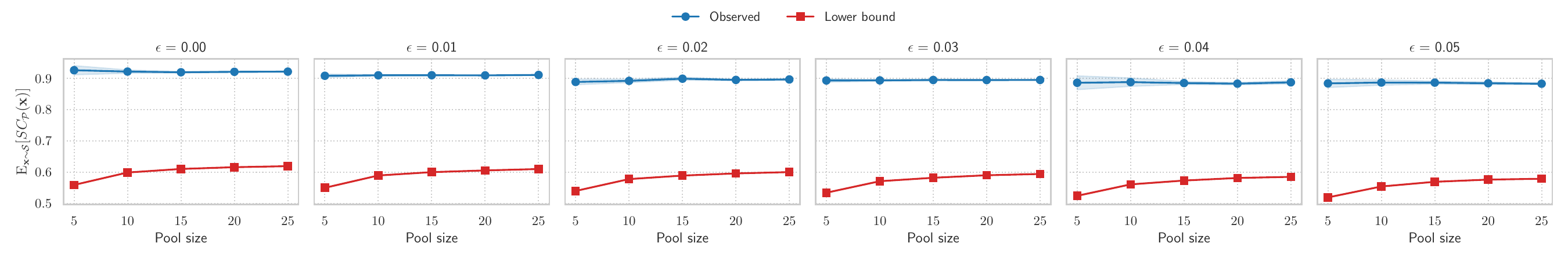}
    \caption{Expected self-consistency of the model pool returned by \algoname{FairSLIM} on the \textbf{train set} as a function of varying Pool Sizes. The large gap between the observed self-consistency and its theoretical lower-bound is persistent throughout.}
    \label{fig:pooltrain}
\end{figure*}

\begin{figure*}
    \centering
    \includegraphics[width=\linewidth]{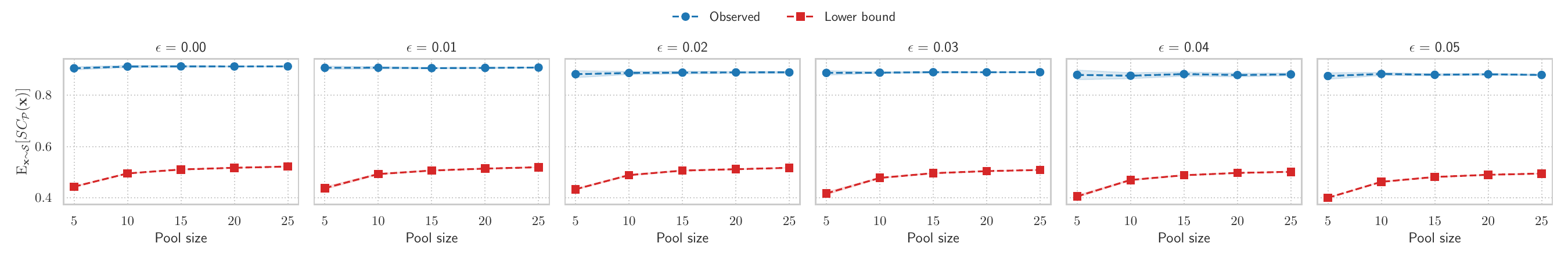}
    \caption{Expected self-consistency of the model pool returned by \algoname{FairSLIM} on the \textbf{test set} as a function of varying Pool Sizes. The large gap between the observed self-consistency and its theoretical lower-bound is persistent throughout.}
    \label{fig:pooltest}
\end{figure*}

\begin{table*}[t!]
\centering\small
\caption{Overview of the 20 static and 23 dynamic features used in \algoname{\RisCanvi} for risk assessment.}
\label{tab:riscanvifeatures}
\begin{tabular}{@{}clc@{}}
\toprule
\textbf{Feature} & \multicolumn{1}{l}{\textbf{Feature Name}}                         & \textbf{Feature Type} \\ \midrule
F1               & Violent Base Crime                                                & Static                \\
F2               & Age at the time of Base Crime                                     & Static                \\
F3               & Intoxication during the Commission of Base Crime                  & Static                \\
F4               & Number of Victims with Injuries                                   & Static                \\
F5               & Duration of the Penalty                                           & Static                \\
F6               & Uninterrupted Time in Prison                                      & Dynamic               \\
F7               & History of Violence                                               & Static                \\
F8               & Age for Start of Criminal or Violent Activity                     & Static                \\
F9               & Increase in the Frequency, Severity and/or Diversity of Crimes    & Static                \\
F10              & Conflicts with Other Inmates                                      & Dynamic               \\
F11              & Non-Compliance with Criminal Measures                             & Dynamic               \\
F12              & Disciplinary Files                                                & Static                \\
F13              & Evasions or Escapes                                               & Static                \\
F14              & Regression of Degree                                              & Static                \\
F15              & Break of Permissions                                              & Dynamic               \\
F16              & Childhood Maladjustment                                           & Static                \\
F17              & Distance between Residence and Penitentiary                       & Dynamic               \\
F18              & Level of Education                                                & Dynamic               \\
F19              & Problems in Employment                                            & Static                \\
F20              & Lack of Financial Resources                                       & Dynamic               \\
F21              & Absence of Viable Future Plans                                    & Dynamic               \\
F22              & Criminal History in the Family of Origin                          & Static                \\
F23              & Problematic Socialization in the Family of Origin                 & Static                \\
F24              & Lack of Family and Social Support                                 & Dynamic               \\
F25              & Criminal Friendships                                              & Dynamic               \\
F26              & Belongs to Social Risk Groups                                     & Dynamic               \\
F27              & Prominent Criminal Role                                           & Dynamic               \\
F28              & Victim of Gender Violence (only applicable to women)              & Dynamic               \\
F29              & Current Family Burdens                                            & Dynamic               \\
F30              & Drug Abuse or Dependence                                          & Dynamic               \\
F31              & Alcohol Abuse or Dependence                                       & Dynamic               \\
F32              & Severe Mental Disorder                                            & Static                \\
F33              & Promiscuous Sexual Behavior or Paraphilia                         & Static                \\
F34              & Limited/No Response to Psychological and/or Psychiatric Treatment & Dynamic               \\
F35              & Personality Disorder related to Anger, Impulsivity or Violence.   & Static                \\
F36              & Poor Coping with Stress                                           & Dynamic               \\
F37              & Attempts or Behaviors of Self-Harm                                & Static                \\
F38              & Pro-Criminal Attitudes or Anti-Social Values                      & Dynamic               \\
F39              & Low Mental Capacity and Intelligence                              & Static                \\
F40              & Recklessness                                                      & Dynamic               \\
F41              & Impulsivity and Emotional Instability                             & Dynamic               \\
F42              & Hostility                                                         & Dynamic               \\
F43              & Irresponsibility                                                  & Dynamic               \\ \bottomrule
\end{tabular}
\end{table*}

\end{document}